\newtheorem{thm}{Theorem}
\newtheorem{cond}{Condition}
\newtheorem{lemma}{Lemma}
\newtheorem{definition}{Definition}
\newtheorem{corollary}{Corollary}
\newtheorem{remark}{Remark}
\title{Adaptive Pruning of Pretrained Transformer via Differential Inclusions}
\newcommand{\modelname}[0]{SPP}
\def\eqref#1{equation~\ref{#1}}
\def\1{\bm{1}}
\DeclareMathAlphabet{\mathsfit}{\encodingdefault}{\sfdefault}{m}{sl}
\SetMathAlphabet{\mathsfit}{bold}{\encodingdefault}{\sfdefault}{bx}{n}
\author{Yizhuo Ding, Ke Fan, Yikai Wang\textsuperscript{\faEnvelopeO}, Xinwei Sun\textsuperscript{\faEnvelopeO}, Yanwei Fu \\
 School of Data Science, Fudan University \\
 \texttt{\{yzding22,kfan21\}@m.fudan.edu.cn};\quad\texttt{yi-kai.wang@outlook.com};\\
  \texttt{\{sunxinwei,yanweifu\}@fudan.edu.cn} \\
 }
\begin{document}
\maketitle

\def\thefootnote{}\footnotetext{ \textsuperscript{\faEnvelopeO}Corresponding authors.}\def\thefootnote{\arabic{footnote}}

\begin{abstract}
Large transformers have demonstrated remarkable success, making it necessary to compress these models to reduce inference costs while preserving their performance. Current compression algorithms prune transformers at fixed compression ratios, requiring a unique pruning process for each ratio, which results in high computational costs. In contrast, we propose pruning of pretrained transformers at any desired ratio  within a single pruning stage, based on a differential inclusion for a mask parameter. This dynamic can generate the whole regularization solution path of the mask parameter, whose support set identifies the network structure. Therefore, the solution path identifies a Transformer weight family with various sparsity levels, offering greater flexibility and customization. In this paper, we introduce such an effective pruning method, termed \modelname\ (Solution Path Pruning). To achieve effective pruning, we segment the transformers into paired modules, including query-key pairs, value-projection pairs, and sequential linear layers, and apply low-rank compression to these pairs, maintaining the output structure while enabling structural compression within the inner states. Extensive experiments conducted on various well-known transformer backbones have demonstrated the efficacy of \modelname. Our code is available at https://github.com/yizhuoDi/Solution-Path-Pruning.
\end{abstract}
\section{Introduction}
\label{sec:intro}
Transformers have succeeded in various tasks due to their scalability, parallel processing abilities, and capacity to learn complex data patterns. Pretrained transformers, in particular, perform well on downstream tasks by leveraging large datasets during training. These models are highly effective for transfer learning, allowing fine-tuning for different tasks and delivering strong performance in many natural language processing applications. However, their large size, often with billions of parameters, makes them difficult to deploy on low-cost hardware. Despite their widespread use and impressive performance \citep{radford2021learning, pmlr-v139-touvron21a}, running transformers on lightweight devices like phones and laptops remains challenging due to computational constraints. Therefore, compressing transformers to run efficiently on affordable hardware is crucial.

Various techniques have been developed to compress transformers while preserving their performance. These include weight sharing \citep{lan2019albert}, low-rank factorization \citep{yu2017compressing}, quantization \citep{gong2014compressing, polino2018model, tao2022compression}, knowledge distillation \citep{hinton2015distilling, yuan2019revisit, TrainDisAttn, liu2020metadistiller}, and pruning \citep{yu2023x, yang2023global, shi2023upop, yin2023gohsp}. Pruning methods typically involve structured pruning, wherein entire neurons or attention heads are removed based on importance, followed by fine-tuning to regain accuracy. Despite the array of pruning methods developed for transformers, current algorithms face a fundamental challenge: they are tailored to achieve a predefined pruning ratio by the end of the pruning phase.

\begin{figure}          
    \centering
    \setlength{\abovecaptionskip}{-0.2cm}   
    \setlength{\belowcaptionskip}{-0.5cm} 
\includegraphics[width=0.9\linewidth]{./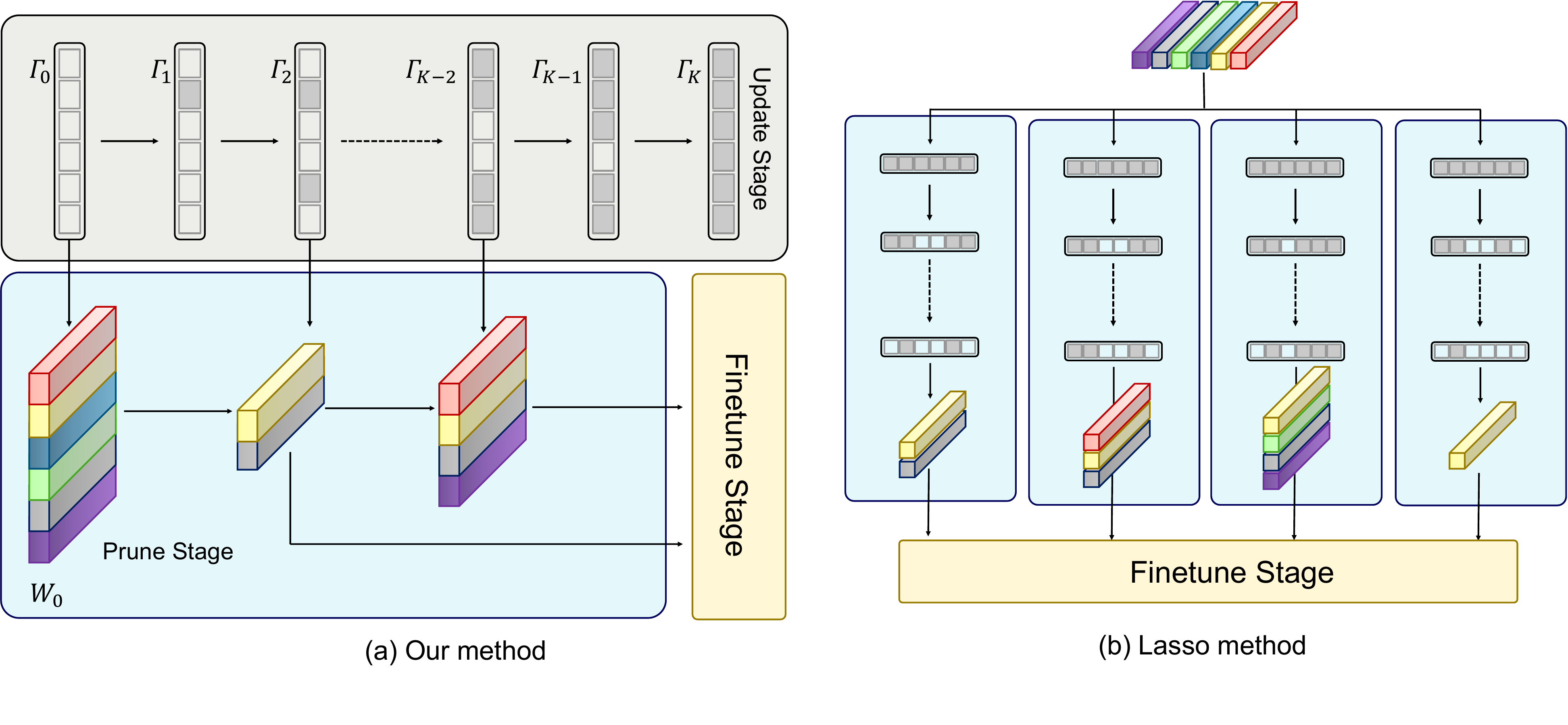}       
\caption{Comparison of \modelname\ and lasso method. (a) \modelname\ can obtain sparse models of all sparsity after search stage , which includes update stage and prune stage. (b) Lasso method can only obtain one sparse model in a single search stage. \label{fig:overview}}     
\end{figure}

Typically, when targeting a new degree of sparsity, the entire pruning process is restarted to meet the new target. Restarting such an entire pruning process for each level of sparsity introduces high costs for model compression. For over-parameterized models, the training cost can be substantial.

To address this challenge, we propose an adaptive compression strategy through a differential inclusion for mask-based pruning of pretrained transformer, named \modelname\ (Solution Path Pruning). The dynamics efficiently generates a whole regularization solution path of masks,where each mask's support set captures the important weights. As is shown in Figure\ref{fig:overview}, along this solution trajectory, the sparsity of the projected target model incrementally increases, with important weights learned earlier in the dynamics until convergent to the fully dense network. Besides, this dynamics has a simple iterative form to implement. Therefore, after running a whole iteration, we can obtain a \textit{transformer weight family} with different compression ratios. Note that ~\citet{fu2020dessilbi} used a similar pruning method for structured sparsity in neural networks trained from scratch. They also used differential inclusions with inverse scale spaces to train the network. Our focus, however, is on pre-trained transformers, which are much harder to prune while maintaining performance. Unlike common pruning methods, \modelname\ does not require restarting the search stage, rendering it a cost-effective pruning technique requiring just one search stage to derive a transformer weight family with diverse sparsity levels from the uncompressed model.

Our exploration of transformer structures adopts a novel and efficient approach, automatically learning compression structures throughout training, thus enjoying greater flexibility and customization. Currently, no other algorithm can theoretically guarantee optimization convergence while producing sparse models with different levels of sparsity during training. For example, the Upop~\citep{shi2023upop} algorithm only makes the model structure sparse at the final step, with the mask becoming zero only at that point. The OFA~\citep{cai2019once} method lacks a convergence guarantee and simply fixes the mask updates during training to achieve a sparse network. \modelname\  addresses this by using the solution path to obtain sparse structures without affecting convergence, providing a stronger theoretical foundation. Notably, our method can be extended to prune large language models (LLMs) with one-shot post training.

In this paper, we applied \modelname\ to the classification task dataset ImageNet~\citep{5206848} using the DeiT~\citep{pmlr-v139-touvron21a} backbone. Furthermore, we also extended the method to image and text retrieval datasets, using CLIP models~\citep{radford2021learning}.
Our contributions are summarized as follows:
\begin{itemize}[leftmargin=*,itemsep=0pt,topsep=0pt,parsep=0pt]
    \item We develop a differential inclusion-based method for adaptive compression of pretrained transformer, enabling the acquisition of sparse models with different sparsity within a single search stage, significantly reducing the cost of model pruning.
    \item We introduce the novel concept of the Transformer Weight Family, obtained through a simple iterative algorithm following the discretization of the differential inclusion. 
    \item We also prove the global convergence of the method in such a non-convex optimization based on the Kurdyka-Łojasiewicz framework, demonstrating that the entire iterative sequence converges to a critical point of the empirical loss function from arbitrary initializations.
    \item We demonstrate the effectiveness of our framework across various backbones and datasets. Results show that we can significantly reduce the computational costs while preserving the prediction accuracy. 
\end{itemize}
\section{Related work}
\label{sec:related_work}
\textbf{Transformer pruning.}
In the expanding field of Transformer-based model research, the concept of pruning Transformer has garnered considerable interest for its potential to streamline model architectures. 

Over recent years, various structured pruning methods have been developed. ViT-Slim~\citep{Vit-slim} employs a single-shot training approach to search for optimal sub-structures in vision transformers. SAViT~\citep{chen2021chasing} collaboratively prunes all components of Vision Transformers, integrating structure-aware interactions and using a Taylor-based optimization function for joint importance. UP-ViTs~\citep{yu2023unified} introduces a unified pruning framework for vision transformers, focusing on comprehensive pruning of all components of ViT models and their variants.

The process of pruning a Transformer model is twofold: first targeting the MLP and then the Attention mechanism. Approaches such as WDpruning~\citep{yu2022width} employ a mask-based technique. Specifically, a mask 
$M$ is defined to correspond to each column of the MLP's weight matrix, and pruning is conducted by considering the gradient magnitudes of this mask. However, due to the complicated structure of the Attention mechanism, such strategies may fail.

WDpruning~\citep{yu2022width} extends its paradigm by incorporating head-level pruning within the multi-head attention framework. Concurrently, methodologies like SAVIT and Upop advocate for the pruning of the input projection matrices, retaining the structural integrity of the query, key, and value matrices. Those approaches, however, lack flexibility and expandability. Because the matrices for the query, key, and value play distinct roles during the forward pass of the attention process.

Our methodology introduces an innovative approach to pruning within the attention paradigm. It enables an asymmetric dimensionality between the query, key, and value matrices after pruning, allowing for a more nuanced and efficient pruning process. This novel technique does not necessitate the uniform dimensionality across these matrices, thereby enhancing the pruning mechanism's flexibility and applicability to diverse Transformer-based architectures.

\textbf{Mirror Descent Algorithm and Bregman Inverse Scale Space.}
Mirror Descent Algorithm (MDA) was first proposed by \citet{nemirovskij1983problem} to solve constrained convex optimization, and can be seen as a generalized projected gradient descent \citep{beck2003mirror} using Bregman distance $B_\Omega (u, v) := \Omega(u) - \Omega(v) - \langle \nabla \Omega(v), u - v \rangle$ induced by a convex and differentiable function $\Omega(.)$.

Convergence analysis for convex loss has been extended to stochastic versions \citep{Lan-SGD2013} and Nesterov acceleration \citep{krichene2015accelerated,su2016differential}. For non-convex loss in deep learning, convergence to global optima for overparameterized networks has been established \citep{azizan2019sto}. For non-differentiable penalties, such as $\ell_1$ penalty for sparse recovery, \citet{ORXYY16,SoYeZha08} proposed Linearized Bregman Iteration (LBI), follows a discretized \textit{solution path} of differential inclusions called Bregman Inverse Scale Space (ISS). These solution paths enjoy the inverse scale space property, which means important features such as the signal, will be learned earlier than non-important ones such as noise. 

\textbf{Mask-based pruning for pretrained CNN.}\citet{fu2020dessilbi, fu2022exploring, bungert2022bregman} applied LBI to forwardly train a network from scratch, based on the lottery ticket hypothesis. By incorporating several training techniques tailored to the network architecture, the sparse network achieved comparable results to the fully dense network. \textbf{In contrast}, we propose a new differential inclusion for mask-based pruning, which can adaptively prune a pre-trained transformer. This method generates an iterative solution path that uncovers key sparse architectures early during training. Our method can perform consistently well across various backbones and datasets. Unlike ADMM~\citep{wahlberg2012admm,Boyd-DADMM2011}, which focuses on convergence, our differential inclusion dynamics aim at generating a whole solution path with various compression ratios. 

\section{Method}

\textbf{Problem setup.}
Given the model weights $W$, inputs $X$, and the objective $\mathcal{L}$,
the target of pruning is minimizing the size of model and keeping the performance of the model. i.e.
\begin{equation}
\min_W  \mathcal{L}\left(X,W\right) \quad s.t.\quad \rho(W) < \rho,
\end{equation}
where $\rho(W)$ is the sparsity level of model, $\rho\in(0,1]$ is the desired degree of sparsity. 
Straightforward unstructured pruning simply discards unimportant weights in the architecture, but this approach often fails to meet the requirements for accelerating computation or reducing memory costs. In contrast, structural pruning reduces the complexity and computational cost of neural networks by removing entire structural units, making it more suitable for hardware acceleration and practical deployment. In this paper, we focus on the problem of structural pruning.

\textbf{Transformer weight family.} Our goal is to efficiently obtain a family of neural networks with different sparsity levels. To achieve this, we propose a dynamic approach based on differential inclusion induced by $\ell_1$ penalty. This dynamic can generate a whole regularization solution path from sparse to dense, with important weights learned earlier. Besides, it enjoys a simple iterative form to implement, which identifies a weight family with different sparsity levels. Such a weight family can be obtained from a single search stage, eliminating the need to restart the search process. This method is cost-effective compared to previous pruning methods.

\subsection{Mask-based pruning}
\label{sec:paired_prune}

\textbf{Structural weight importance mask.}
Without loss of generality, suppose the weight matrix is $W\in\mathbb{R}^{m\times d}$ where $d$ is the feature dimension we want to prune.
We introduce the mask matrix $M=(mask_1,\ldots,mask_d)\in[0,1]^{1 \times d}$ for every weight matrix of the transformer models with the same matrix size, where $mask_i$ indicates the importance of corresponding column.
This column-wise design naturally achieves structural pruning by discarding columns of the weight matrices.
Subsequently, the masked network is defined by:
\begin{equation}
    \mathcal{\bar{L}}(W, M)=\mathcal{L}(W\odot M),
    \label{Eq:Loss_func} 
\end{equation}
where $\odot$ denotes the Hadamard product. 
Our target for the mask-based structural pruning task is to learn the $M$ to identify the important columns while discarding the others.

\textbf{Pair-wise shared mask.}
In transformers, adjacent layers are typically coupled in the feature dimensions.
 When considering them separately, manual efforts like padding are required to avoid dimension mismatch issues. To address this, many pruning approaches use a shared mask for an entire module, such as attention layers. While this ensures dimensional consistency within the same layer, it is conservative and lacks the potential for more fine-grained pruning.

To achieve both dimension matching and pruning flexibility, we propose pruning at the smallest pair-wise level. This approach maximizes flexibility without causing dimension mismatches. Specifically, transformers are primarily based on multi-head self-attention (MHSA) layers and feed-forward MLPs. We suggest dividing MHSA into query-key and value-output pairs while considering the adjacent linear layers in MLPs.
Given the standard MHSA as,
\begin{equation}
A_i = \text{softmax}\left(XW_{Q,i}  \left(X W_{K,i}\right)^T / \sqrt{d}\right), \quad V_i =  A_i  (XW_{V,i} ), \quad O_{\mathrm{MHSA}} = \sum   V_i  W_{proj,i}, \label{eq.atten_arc}
\end{equation}
where $i$ is the $i$-th head, $W_{K,i} \in \mathbb{R}^{m \times d} $, $W_{Q,i} \in \mathbb{R}^{m \times d}$, $W_{V,i} \in \mathbb{R}^{m \times d_1} $, $W_{proj,i} \in \mathbb{R}^{d_1 \times m}$ are weight matrices of the Query, Key, Value and output projection, respectively. 
The $m$ and $d$ denotes the dimension of the input features and the hidden feature, respectively.

Our proposed pair-wise shared mask introduces the weight importance mask via,
\begin{subequations} 
\begin{align}
&{W_{Q,i}}, {W_{K,i}}  \rightarrow {W_{Q,i}}\odot{ M_{QK}}, {W_{K,i}}\odot{ M_{QK}} \label{Eq:LR_prune1}, \\
&{W_{V,i}}, {W_{proj,i}}    \rightarrow {W_{V,i}}\odot{ M_{V}} \label{Eq:LR_prune2},  \rightarrow { M_{V}^T}\odot{W_{proj,i}} 
\end{align}
\end{subequations}
 where ${M_{QK}}\in [0,1]^{1 \times d},{M_{V}}\in [0,1]^{1 \times d_1}$ and $\Vert{M_{QK}}\Vert_0\ll d,\Vert {M_{V}}\Vert_0\ll d_1$. 
The same mask matrix are shared in the query-key pair and in value-output pair.
Similarly, for the feedforward module,
\begin{align}
    \operatorname{FFN}(X) = \phi\left(X{W_{\mathrm{input}}}\right){W_{\mathrm{output}}}, \ \text{${W_{\mathrm{input}}} \in \mathbb{R}^{n \times d},{W_{\mathrm{output}}} \in \mathbb{R}^{d \times n}$, $\phi$ is activate function},
\end{align}
we assign a shared mask $M_{\mathrm{MLP}}\in\mathbb{R}^{1\times d}$ to prune $W_{\mathrm{input}}$ and $W_{\mathrm{output}}$:
\begin{equation}
    {W_{\mathrm{input}}},{W_{\mathrm{output}}}\rightarrow {W_{\mathrm{input}}}\odot{ M_{\mathrm{MLP}}},{ M_{\mathrm{MLP}}^T}\odot{W_{\mathrm{output}}}
\end{equation}

\textbf{Sparse optimization of masks.}
To minimize information loss in the pruned model, a common objective is to ensure that the weights being pruned gradually approach zero during the search phase. To achieve this, it is essential to add a regularization term, specifically, the \( L_1 \) loss of the mask, to the loss function in the optimization algorithm. The updated loss function can be formulated below:
\begin{equation}
    \bar{\mathcal{L}}(W, M)=\mathcal{L}(W\odot M)+\lambda\left\Vert M\right\Vert _{1}+I_{[0,1]}(M),\label{Eq:Loss_sparsity_01} 
\end{equation}
where the indicator function of interval $[0, 1]$ restrict the mask to a meaningful range. And we initialize $M$ with all-one values. However, the Lasso optimization method is insufficient for adaptive compression.
This limitation arises because the solution path of Lasso only exists when the optimization function is linear. As a result, Lasso can only produce a model with a certain level of sparsity, dependent on the hyperparameters. Therefore, we need to find an alternative method to effectively address the sparsification problem.

\begin{algorithm}[t]
\setstretch{0.9}
\caption{Transformer Weight Family}
    \label{alg:main_alg}
    \begin{algorithmic}

     \STATE Perform searching
       \STATE {\bfseries Input:} Pretrained weight $W_0$ and a step size \( \alpha\), iteration steps in the update stage $T_s$ and prune stage $T_p$ \\
       \STATE Initialize sub-gradient $V_{0}=0$, mask $M_0=1$, sparse mask $\Gamma_0=0$.\\
       \FOR{$k=0$ {\bfseries to} $T_s$}
       \STATE\small{\textcolor{gray}{\texttt{\# Calculate the loss}}}
       \STATE $\hat{L}=L(W_{0}\odot M_{k})+\frac{1}{2\nu} \|M_k - \Gamma_k\|_2^2$
       \STATE \small{\textcolor{gray}{\texttt{\# update $V_{k}$ and mask $M_{k}$ according to sub-gradient}}}
       \STATE $M_{k+1}=M_{k}-\kappa \alpha_k \nabla_{M_{k}}\hat{L}$
       \STATE $V_{k+1}=V_{k}-\alpha_k \nabla_{\Gamma_{k}}\hat{L}$
       \STATE \small{\textcolor{gray}{\texttt{\# update $\Gamma_{k}$ as the proximal operator with penalty $h(\Gamma)=\lambda\left\Vert \Gamma\right\Vert _{1}+I_{[0,1]}(\Gamma)$}}}
       \STATE $\Gamma_{k+1} = \text{Prox}_{h}\left(V_{k+1}\right)$, where $\text{Prox}(V)=argmin_{\Gamma}\left\{ \left\Vert V-\Gamma\right\Vert _{2}^{2}/2+h(\Gamma)\right\}$
       \ENDFOR 
       \STATE Perform pruning
       \FOR{$k=0$ {\bfseries to} $T_p$}
       \STATE \small{\textcolor{gray}{\texttt{\# Update masks with a reverse turn of $\Gamma$ being non-zero in searching}}}
       \STATE $M_{k+1}=\Gamma_{\hat{k}},\hat{k}=int(T_s - (k+1)\frac{T_s}{T_p})$
       \STATE \small{\textcolor{gray}{\texttt{\# Update and save the sparse model weights}}}
       \STATE $\bar{W}_{k+1}=W_{k+1}\odot M_{k+1}$
       \STATE \small{\textcolor{gray}{\texttt{\#Save the checkpoint of $\bar{W}_{k+1}$ as the pruned model}}}
       \ENDFOR
       \STATE \small{\textcolor{gray}{\texttt{\# Return Weight Family for the model}}}
       \STATE {\bfseries Output:} Transformer Weight Family:$\{\bar{W}_{i} | i \in [0,T_{p}] \}$      
    \end{algorithmic}
\end{algorithm}

\subsection{Differential inclusion for regularization weight family}
Given pre-trained weights $W_0$, we aim to obtain a weight family with different sparsity. To effectively achieve this, we consider the following dynamics:
\begin{subequations}
\label{eq.iss}
    \begin{align}
    \frac{\dot{M}_t}{\kappa} &= -\nabla_M \mathcal{L}_\rho (M_t, \Gamma_t), \label{eq.iss-a} \\
    \dot{V}_t &= -\nabla_\Gamma \mathcal{L}_\rho (M_t, \Gamma_t),\label{eq.iss-b} \\
    V_t &\in \partial \Omega(\Gamma_t), \label{eq.iss-c}
\end{align}
\end{subequations}
where $V$ is a sub-gradient of $\Omega(\Gamma) := \Vert \Gamma \Vert_1 + \mathbbm{1}_{[0,1]}(\Gamma) + \frac{1}{2\kappa} \Vert \Gamma \Vert^2$; and $\kappa$ is a damping factor to ensure the right continuity of the solution path. Since $M$ cannot be initialized to zeros, we introduce an augmented parameter $\Gamma$ and enforce it to be sparse. To learn important weights, we also enforce $\Gamma$ to be close to $M$ through an $\ell_2$ penalty in the following appended loss: 
\begin{equation}
\label{Eq:sparse_loss}
    \mathcal{L}_\rho (M, \Gamma) = \mathcal{L} (W_0\odot M) + \frac{\rho}{2} \Vert M - \Gamma \Vert_F^2, \ (\rho > 0).
\end{equation}
\begin{remark}
    Eq.~\ref{eq.iss} is a special form of a mirror descent flow with $\Omega(\Gamma)$. Unlike previous works for mirror descent primarily concerned with convergence analysis, our main objective is to obtain a regularization solution path, where each solution along this solution corresponds to a sparse mask that defines the network's sparse structure.  
\end{remark}

The differential inclusion in Eq.~\ref{eq.iss} generates a solution path for $(M_t,\Gamma_t)$, where \(M_t\) follows a gradient descent flow to learn the activation values for the pre-trained weights \(W_0\), while \(\Gamma_t\) is updated via a mirror descent flow with the penalty \(\Omega(\Gamma)\) to explore the sparse structure. During the updating process of \(\Gamma_t\), important weights are learned earlier than non-important ones. Essentially, this is known as the inverse scale space property in inverse problems~\citep{burger2006nonlinear, osher2016sparse}.

Similarly, starting from $\Gamma_0 = \mathbf{0}$, $\Gamma_t$ identifies a family of network structures, where important weights will be learned in earlier epochs. Specifically, note that for each element $i$, $| V(i) | \leq 1$, if $\Gamma(i) = 0$, and is equal to $1$ when it becomes non-zeros. Therefore, driven by the gradient descent in Eq.~\ref{eq.iss-b} that gives a dense solution $\lim_{t \to \infty} (M_t,\Gamma_t) = \arg\min_{M,\Gamma} \mathcal{L}_\rho(M,\Gamma)$, $V_t$ that begins from $V_0 = \mathbf{0}$, will have more of its elements reaching the boundary of 1, making corresponding elements in $\Gamma_t$ becoming non-zeros. Compared to directly solving Eq.~\ref{Eq:Loss_sparsity_01}, our dynamics is more efficient in generating a network family with various sparsity levels and, hence is more flexible for deployment. 
\begin{remark}
   The differential inclusion has been previously explored to forwardly train a sparse network from scratch \citep{fu2020dessilbi, fu2022exploring,bungert2022bregman},  incorporating several deep learning techniques tailored to the multilayer perceptron and the convolutional neural network. However, these techniques may not be applicable to the Transformer, and thus may lead to suboptimal performance. In contrast, our dynamics involves backward pruning of a pre-trained network. Furthermore, instead of weight-based pruning, we employ mask-based pruning, which is significantly easier to train.
\end{remark}

To implement, we consider the following iteration, beginning from $V_0 = \Gamma_0 = \mathbf{0}$, and $M_0= \mathbf{1}$:
\begin{subequations}
\label{Eq:weight_fa_alg}
    \begin{align}
    M_{k+1} &= M_k - \kappa\alpha_k \cdot \nabla_M \mathcal{L}_\rho (M_k, \Gamma_k), \label{Eq:weight_fa_alg_a} \\
    V_{k+1} &= V_k - \alpha_k \cdot \nabla_{\Gamma} \mathcal{L}_\rho (M_k, \Gamma_k), \label{Eq:weight_fa_alg_b}\\
    \Gamma_{k+1} &= \kappa \cdot \mathrm{Prox}_{\Omega} (V_{k+1}), \label{Eq:weight_fa_alg_c}
\end{align}
\end{subequations}
where $\alpha$ is the step size and should be small enough to well approximate the original dynamics in ~Eq.~\ref{eq.iss}. In ~Eq.~\ref{Eq:weight_fa_alg_b}, the proximal operator $\mathrm{Prox}_{\Omega}(\cdot)$ is defined as:
\begin{equation}
    \text{Prox}_{\Omega} (V) = \arg\min_{\Gamma} \left\{ \frac{1}{2} \|\Gamma - V\|^2 + \Omega_ (\Gamma) \right\},
\end{equation}
which gives $\Gamma(i) = 1$ if $V(i) > 2$, $ = V(i) - 1$ if $1 < V \leq 2$, $ = 0$ otherwise. Running ~Eq.~\ref{Eq:weight_fa_alg} will obtain a solution path of $\Gamma_k$ with increasing levels, that define a family of weights dubbed as \textit{Transformer weight family}:
\begin{equation}
\label{Eq:weight_family_eq}
    W_{k} = W_0 \odot \Gamma_{k}.
\end{equation}

\section{Convergence}
\label{sc:theory}

We present a theorem that guarantees the \emph{global convergence}
of our method, \emph{i.e.} from any initialization, the sequence converges to a critical point of $\bar{\cal L}$. Our treatment extends the Block Coordinate Descent (BCD) studied in \citet{Zeng2019}, with a crucial difference being the mirror descent involved in our method. Instead of the splitting loss in BCD, a new Lyapunov function is developed here to meet the Kurdyka-{\L }ojasiewicz property \citep{Lojasiewicz-KL1963}. 
Let $P:=(M,\Gamma)$. Following \citet{huang18_aistats}, our method  algorithm can be rewritten
as the following standard linearized Bregman iteration,
\begin{equation}
 P_{k+1}=\arg\min_{P}\left\{ \langle P-P_{k},\alpha\nabla\bar{\cal L}(P_{k})\rangle+B_{\Psi}^{p_{k}}(P,P_{k})\right\} \label{Eq:SLBI-reformulation}
\end{equation}
where 
\begin{equation}
\Psi(P) =\Omega_\lambda(\Gamma)+\frac{1}{2\kappa}\|P\|_{2}^{2} \nonumber  =\Omega_\lambda(\Gamma)+\frac{1}{2\kappa}\|W\|_{2}^{2}+\frac{1}{2\kappa}\|\Gamma\|_{2}^{2},\label{Eq:Phi-tilde}
\end{equation}
$p_{k}\in\partial\Psi(P_{k})$, and $B_{\Psi}^{q}$ 
is the Bregman divergence associated with convex function $\Psi$,
defined by 
\begin{align}
B_{\Psi}^{q}(P,Q) & :=\Psi(P)-\Psi(Q)-\langle q,P-Q\rangle. \label{Eq:Bregman-divergence}
\end{align}
for some $q\in\partial\Psi(Q)$. Without loss of generality, consider $\lambda=1$ in the sequel. One can establish the global convergence of our method  under the following conditions.

\begin{cond} 
\label{Assumption} Suppose that: (a) $L(W)=\frac{1}{n}\sum_{i=1}^{n}\ell(y_{i},\Phi_{W}(x_{i}))$ is
	continuous differentiable and $\nabla L$ is Lipschitz continuous
	with a positive constant $Lip$; (b)$L(W)$ has bounded level sets;
	(c) $L(W)$ is lower bounded (without loss of generality, we assume
	that the lower bound is $0$); (d) $\Omega$ is a proper lower semi-continuous
	convex function and has locally bounded subgradients, that is, for
	every compact set ${\cal S}\subset\mathbb{R}^{n}$, there exists a
	constant $C>0$ such that for all $\Gamma\in{\cal S}$ and all $g\in\partial\Omega(\Gamma)$,
	there holds $\|g\|\leq C$; and (e) the Lyapunov function 
	\begin{align}
	F(P,\tilde{g}):=\alpha\bar{\cal L}(M,\Gamma)+B_{\Omega}^{\tilde{g}}(\Gamma,\tilde{\Gamma}),
	\label{Eq:Lyapunov-fun}
	\end{align}
	is a Kurdyka-{\L }ojasiewicz function on any bounded set, where
	$B_{\Omega}^{\tilde{g}}(\Gamma,\tilde{\Gamma}):=\Omega(\Gamma)-\Omega(\tilde{\Gamma})-\langle\tilde{g},\Gamma-\tilde{\Gamma}\rangle$,
	$\tilde{\Gamma}\in\partial\Omega^{*}(\tilde{g})$, and $\Omega^{*}$
	is the conjugate of $\Omega$ defined as 
	\begin{align*}
	\Omega^{*}(g):=\sup_{U\in\mathbb{R}^{n}}\{\langle U,g\rangle-\Omega(U)\}.
	\end{align*}
\end{cond}

Now we are ready to present the main theorem. 

\begin{thm}{[}Global Convergence of \modelname{]} 
\label{Thm:conv-SLBI}
 Suppose that Assumption
	\ref{Assumption} holds. Let $(W_{k},\Gamma_{k})$ be the sequence
	generated by Our method  (Eq. (\ref{Eq:weight_fa_alg}))
	with a finite initialization. If 
\begin{equation}
	0<\alpha_{k}=\alpha<\frac{2}{\kappa(Lip*C+\nu^{-1})},
\end{equation}
where $C=\max|W_0|$ is a max value of the pretrained model then $(M_{k},\Gamma_{k})$ converges to a critical point of $\bar{\mathcal{L}}$ defined in Eq. (\ref{Eq:weight_fa_alg}),
	and $\{M^{k}\}$ converges to a critical point of $\mathcal{L}(W)$. 
\end{thm}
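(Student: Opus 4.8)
The strategy is to reformulate the discrete iteration \eqref{Eq:weight_fa_alg} as the linearized Bregman iteration \eqref{Eq:SLBI-reformulation} and then run the Kurdyka-{\L}ojasiewicz (KL) machinery on the Lyapunov function $F(P,\tilde g)$ in \eqref{Eq:Lyapunov-fun}, following the template of \citet{Zeng2019} but adapted to the mirror-descent structure. First I would verify the reformulation: show that the $(M,V,\Gamma)$ updates in \eqref{Eq:weight_fa_alg_a}--\eqref{Eq:weight_fa_alg_c} are exactly the optimality conditions of \eqref{Eq:SLBI-reformulation} with $\Psi(P)=\Omega(\Gamma)+\frac{1}{2\kappa}\|P\|_2^2$, using the standard identity $p_{k+1}=p_k-\alpha\nabla\bar{\mathcal L}(P_k)$ and unfolding the definition of $\mathrm{Prox}_\Omega$. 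This gives the key "sufficient decrease" candidate: the subgradient $p_k=(M_k/\kappa,\,V_k+\Gamma_k/\kappa)$ stays well-defined, and $\tilde g_k:=V_k-\Gamma_k/\kappa\in\partial\Omega(\Gamma_k)$ after removing the strongly convex part.

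Next I would establish the three ingredients the KL argument needs. (i) \emph{Sufficient decrease}: show $F(P_{k+1},\tilde g_{k+1})\le F(P_k,\tilde g_k)-c\|P_{k+1}-P_k\|^2$ for some $c>0$ whenever $0<\alpha<\frac{2}{\kappa(\mathrm{Lip}\cdot C+\nu^{-1})}$. This uses the descent lemma for $\nabla\bar{\mathcal L}$ — here the effective Lipschitz constant of $M\mapsto\mathcal L(W_0\odot M)$ is $\mathrm{Lip}\cdot C$ with $C=\max|W_0|$ by the chain rule, and the $\frac{1}{2\nu}\|M-\Gamma\|^2$ term contributes $\nu^{-1}$ — together with the three-point identity for Bregman divergences and the strong convexity of $\frac{1}{2\kappa}\|\cdot\|^2$ that converts the $B_\Psi$ term into a $\|P_{k+1}-P_k\|^2$ surplus. (ii) \emph{Relative error / subgradient bound}: exhibit $w_{k+1}\in\partial F(P_{k+1},\tilde g_{k+1})$ with $\|w_{k+1}\|\le b\|P_{k+1}-P_k\|$; this is where Condition~\ref{Assumption}(d), the locally bounded subgradients of $\Omega$, is used, along with boundedness of the iterates (from (b), bounded level sets, plus (i) which keeps $F$ bounded). (iii) \emph{Continuity}: $F$ is lower semicontinuous and its value is constant on the limit-point set; combined with (c), $F$ is bounded below, so $\sum_k\|P_{k+1}-P_k\|^2<\infty$.

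With these three properties and the assumption in Condition~\ref{Assumption}(e) that $F$ is a KL function on bounded sets, I would invoke the abstract convergence theorem (Attouch--Bolte--Svaiter style, as in \citet{Zeng2019}): the trajectory has finite length, $\sum_k\|P_{k+1}-P_k\|<\infty$, hence $(M_k,\Gamma_k)$ converges to a single point $(M_\ast,\Gamma_\ast)$. Passing to the limit in the stationarity conditions, $0\in\partial F(P_\ast,\tilde g_\ast)$ forces $M_\ast=\Gamma_\ast$ (the quadratic coupling term vanishes at a critical point) and $0\in\alpha\nabla_M\mathcal L(W_0\odot M_\ast)+\partial\Omega(M_\ast)$ after identifying the Bregman term's gradient, so $(M_\ast,\Gamma_\ast)$ is a critical point of $\bar{\mathcal L}$ and $M_\ast$ is a critical point of $\mathcal L$ in the appropriate (Clarke/limiting) sense. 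The main obstacle I anticipate is item (i): the Lyapunov function mixes the objective $\bar{\mathcal L}(M,\Gamma)$ with a Bregman divergence of $\Omega$ evaluated at \emph{shifted} arguments ($\tilde\Gamma\in\partial\Omega^\ast(\tilde g)$), so getting a clean telescoping decrease requires carefully choosing which dual variable pairs with which primal iterate and exploiting the nonexpansiveness of $\mathrm{Prox}_\Omega$ together with strong convexity of $\Psi$ — the bookkeeping of these shifts, rather than any single inequality, is the delicate part, and it is also where the precise step-size bound $\frac{2}{\kappa(\mathrm{Lip}\cdot C+\nu^{-1})}$ must emerge.
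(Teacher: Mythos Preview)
Your proposal is correct and follows essentially the same route as the paper: reformulate \eqref{Eq:weight_fa_alg} as a linearized Bregman iteration, prove sufficient descent of the Lyapunov function $F$ under the stated step-size bound (with $\mathrm{Lip}\cdot C+\nu^{-1}$ as the effective Lipschitz constant of $\nabla\bar{\mathcal L}$), establish the relative-error bound and boundedness, and then invoke the Attouch--Bolte--Svaiter KL machinery to get finite length and convergence to a critical point, finally reading off $\bar M=\bar\Gamma$ and $\nabla\mathcal L(\bar M)=0$ from the stationarity conditions of $F$. The only cosmetic discrepancy is that the paper indexes the Lyapunov function as $F(Q_k)$ with $Q_k=(P_k,g_{k-1})$ (a one-step shift in the dual slot), which is precisely the ``bookkeeping of shifts'' you already anticipated as the delicate part.
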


\section{Experiments}
\label{sec:experiments}

\begin{table}\small
  \centering
  \setlength{\tabcolsep}{2.5pt}
  \caption{The results of DeiT\citep{touvron2021training} models.We compared the results with the other advanced methods.The reduce is the reduce of parameters.}
  \label{tab:DeiTsIMNET}
  \begin{tabular}{ll c c c c}
    \toprule
       \textbf{Model} & \textbf{Method}  & \textbf{Top-1(\%)} & \textbf{Top-5(\%)} & \textbf{FLOPS(B)} & \textbf{Params(M)} \\
    \midrule
        \multirow{12}{*}{DeiT-Small}
        & Uncompressed       & 79.8  & 95.0 & $4.6_{\textcolor{red}{100\%}}$  & $22.1_{\textcolor{red}{100\%}}$ \\
        & $\text{S}^{\mathrm{2}}$ ViTE-Small\citep{chen2021chasing}  & 79.2 &  -    & -    & $14.6_{\textcolor{red}{66.1\%}}$  \\
        & GOHSP\citep{yin2023gohsp}        & 80.0 & -     & $3.0_{\textcolor{red}{65.2\%}}$ & $14.4_{\textcolor{red}{65.2\%}}$ \\
        &PS-ViT-S\citep{tang2022patch}     & 79.4  & -     & $2.7_{\textcolor{red}{58.7\%}}$  &$22.0_{\textcolor{red}{99.5\%}}$\\
        &ViTAS - E \citep{su2022vitas}     & 77.4  & 93.8  & $2.7_{\textcolor{red}{58.7\%}}$  &$12.6_{\textcolor{red}{57.0\%}}$ \\
        & Upop\citep{shi2023upop}        & 79.6  & 94.8  & $2.8_{\textcolor{red}{60.9\%}}$  &$13.5_{\textcolor{red}{61.1\%}}$\\
        & Upop\citep{shi2023upop}        & 80.2 & 95.1  & $3.2_{\textcolor{red}{69.6\%}}$  &$15.7_{\textcolor{red}{71.0\%}}$ \\
        & ViT-Slim\citep{Vit-slim}     & 80.0 & 95.1  & $3.3_{\textcolor{red}{71.7\%}}$   & $15.7_{\textcolor{red}{71.0\%}}$  \\
        & WDPruning\citep{yu2022width}     & 78.6 & 94.4 & $3.1_{\textcolor{red}{67.4\%}}$  & $15.0_{\textcolor{red}{67.9\%}}$  \\
        & WDPruning     & 78.4 & 94.1 & $2.6_{\textcolor{red}{56.5\%}}$   & $13.3_{\textcolor{red}{60.2\%}}$ \\
        & X-Pruner\citep{yu2023x} & 78.9 & 94.2 & $2.4_{\textcolor{red}{52.2\%}}$ & -    \\
        & OPTIN\citep{khaki2024need} &79.2 &- &$3.2_{\textcolor{red}{68.4\%}}$ &- \\
    \midrule
        & \textbf{\modelname} & \textbf{80.2} & \textbf{95.1} &  $3.4_{\textcolor{red}{73.9\%}}$  & $15.6_{\textcolor{red}{70.6\%}}$ \\
        & \textbf{\modelname} & \textbf{78.9} & \textbf{94.6} &  $2.6_{\textcolor{red}{56.5\%}}$  & $12.6_{\textcolor{red}{57.0\%}}$ \\
        & \textbf{\modelname} & \textbf{77.7} & \textbf{94.0} &  $1.9_{\textcolor{red}{41.3\%}}$  & $10.4_{\textcolor{red}{47.1\%}}$\\
    \midrule
        \multirow{8}{*}{DeiT-Tiny}
        & Uncompressed    & 72.2  & 91.1 & 1.3  & 5.7 \\
        & GOHSP\citep{yin2023gohsp}   & 70.2 & -  & $0.9_{\textcolor{red}{69.2\%}}$ & $4.0_{\textcolor{red}{70.2\%}}$ \\
        &$\text{S}^{\mathrm{2}}$ViTE\citep{chen2021chasing} & 70.1 & - & $1.0_{\textcolor{red}{76.9\%}}$ & $4.2_{\textcolor{red}{73.7\%}}$  \\
        & WDPruning\citep{yu2022width}    & 70.3 & 89.8 & $0.7_{\textcolor{red}{53.8\%}}$  & -  \\
        &PoWER\citep{goyal2020power}  & 69.4 & 89.2 & $0.8_{\textcolor{red}{ 61.5\%}}$ &- \\
        &UPDP\citep{liu2024updp} &70.3 &- & $0.9_{\textcolor{red}{70.3\%}}$ & $3.8_{\textcolor{red}{66.7\%}}$  \\
        &MCF\citep{bai2023multi}& 71.5 &- & $0.7_{\textcolor{red}{53.8\%}}$ &$3.9_{\textcolor{red}{68.4\%}}$ \\
        & OPTIN &71.3 &- &$0.9_{\textcolor{red}{70.3\%}}$ &- \\

    \midrule
        & \textbf{\modelname} & \textbf{72.3} & \textbf{91.1} &  \textbf{$0.8_{\textcolor{red}{ 61.5\%}}$} & \textbf{$4.0_{\textcolor{red}{70.2\%}}$} \\
    \midrule
        \multirow{6}{*}{DeiT-Base}
        &Uncompressed  & 81.8 & 95.6 & $17.5_{\textcolor{red}{100\%}}$ & $86.6_{\textcolor{red}{100\%}}$  \\
        &ViT-B/16  & 77.9 & 95.3 & $17.5_{\textcolor{red}{100\%}}$ & $86.6_{\textcolor{red}{100\%}}$ \\
        &SCOP\citep{tang2020scop}   & 79.7 &94.5 &$10.2_{\textcolor{red}{58.3\%}}$ &$58.3_{\textcolor{red}{67.3\%}}$ \\
        &IA-RED \citep{pan2021ia}      &80.3 &  -   &$11.8_{\textcolor{red}{67.4\%}}$   &$67.0_{\textcolor{red}{77.4\%}}$\\
        &PoWER\citep{goyal2020power}  & 80.1 & 94.6 & $10.4_{\textcolor{red}{59.4\%}}$ &- \\
        & X-Pruner    & 81.02 & 95.38 & $8.5_{\textcolor{red}{48.6\%}}$   & -    \\
    \midrule
        &\textbf{\modelname}  & \textbf{81.9} & \textbf{95.7} &$9.8_{\textcolor{red}{56.0\%}}$ & $48.1_{\textcolor{red}{55.5\%}}$ \\
        &\textbf{\modelname}  & \textbf{81.2} & \textbf{95.4} & $6.9_{\textcolor{red}{39.4\%}}$ & $34.2_{\textcolor{red}{39.5\%}}$\\
        &\textbf{\modelname}  & \textbf{78.1} & \textbf{93.8} & $4.4_{\textcolor{red}{25.1\%}}$ & $22.0_{\textcolor{red}{25.4\%}}$\\
    \bottomrule
  \end{tabular}
  
\end{table}

In this section, we evaluated our method on three transformer models. These models were trained on two classical datasets, ImageNet-1k, COCO and a smaller dataset, CIFAR-10. We also conducted ablation studies to highlight the importance of introducing the mask parameters and gamma buffer. Additionally, we extended our method to large language models (LLMs) such as Llama2-7b and OPT-6.7b. More experimental details are provided in Table \ref{tab:exp_details} and Table \ref{tab:exp_details_update}.

\begin{table}
\small
    \centering
    \caption{Results of Swin-Tiny and DeiT-Tiny on Imagenet-1k, together with result of evaluating DeiT-Small on Cifar10.}
    \label{tab:DeiTsCIFAR}
    \begin{tabular}{llcccc}
    \toprule 
    \textbf{\small{}Model} & \textbf{\small{}Method} & \textbf{\small{}T-1(\%)} & \textbf{\small{}T-5(\%)} & \textbf{\small{}FLOPS(B)} & \textbf{\small{}Pa. (M)}\tabularnewline
    \midrule 
    \multirow{3}{*}{Swin-Tiny}
    &{\small{}Uncompressed} & {\small{}81.2} & {\small{}95.5} & {\small{}4.5} & {\small{}28.0}\tabularnewline
    &{\small{}ViT-Slim} & {\small{}80.7} & {\small{}95.4} & {\small{}$3.4_{\textcolor{red}{75.6\%}}$} & {\small{}$19.4_{\textcolor{red}{69.3\%}}$}\tabularnewline 
    &\textbf{\small{}\modelname} & \textbf{\small{}80.6} & \textbf{\small{}95.2} & \textbf{\small{}$3.4_{\textcolor{red}{75.6\%}}$} & \textbf{\small{}$18.5_{\textcolor{red}{66.1\%}}$ }\tabularnewline
    \midrule
    \multirow{6}{*}{DeiT-Tiny}
    & Uncompressed    & 72.2  & 91.1 & 1.3  & 5.7 \\
    & GOHSP\citep{yin2023gohsp}   & 70.2 & -  & $0.9_{\textcolor{red}{69.2\%}}$ & $4.0_{\textcolor{red}{70.2\%}}$ \\
    &$\text{S}^{\mathrm{2}}$ViTE\citep{chen2021chasing} & 70.1 & - & $1.0_{\textcolor{red}{76.9\%}}$ & $4.2_{\textcolor{red}{73.7\%}}$  \\
    & WDPruning\citep{yu2022width}    & 70.3 & 89.8 & $0.7_{\textcolor{red}{53.8\%}}$  & -  \\
    &PoWER\citep{goyal2020power}  & 69.4 & 89.2 & $0.8_{\textcolor{red}{ 61.5\%}}$ &- \\
    & \textbf{\modelname} & \textbf{72.3} & \textbf{91.1} &  \textbf{$0.8_{\textcolor{red}{ 61.5\%}}$} & \textbf{$4.0_{\textcolor{red}{70.2\%}}$} \\
    \midrule
    \multirow{4}{*}{DeiT-Small}
    &{\small{}Uncompressed} & {\small{}98.5}  & - & {\small{}4.6} & {\small{}22.1}\tabularnewline
    &{\small{}ViT-Slim\citep{Vit-slim}} & {\small{}98.7}& - & {\small{}$3.3_{\textcolor{red}{ 71.7\%}}$} & {\small{}$15.6_{\textcolor{red}{70.6\%}}$}\tabularnewline
    &{\small{}WDPruning\citep{yu2022width}} & {\small{}98.1} & -& {\small{}$2.8_{\textcolor{red}{ 60.9\%}}$} & {\small{}$14.9_{\textcolor{red}{67.4\%}}$}\tabularnewline
    &\textbf{\small{}\modelname} & \textbf{\small{}98.8}& - & \textbf{\small{}$3.3_{\textcolor{red}{ 71.7\%}}$} & \textbf{\small{}$15.4_{\textcolor{red}{69.7\%}}$}\tabularnewline
    \bottomrule
    \tabularnewline
    \end{tabular}
    
\end{table}

\textbf{Experimental setting.} Our method begins with a pretrained model, keeping all parameters unchanged except for the mask parameters. The search stage is performed only once for all degrees of sparsity. From the solution path, we derive a sparse transformer weight family by applying early stopping and saving the model checkpoint when the desired sparsity is achieved. As described in Algorithm \ref{alg:main_alg}, each model produces a transformer weight family with a sparse architecture.

\subsection{Main results}
We reported our results pruning DeiT, Swin and CLIP model with various degrees of sparsity.
\begin{table}\small
\centering
\setlength{\tabcolsep}{2.5pt}
\caption{Ablation: The results of CLIP-large and CLIP-base. \label{tab:CLIP-large} }
\begin{tabular}{llcccccccc}
\toprule 
\multirow{2}{*}{\textbf{Model}}& \multirow{2}{*}{\textbf{Method}} & \multicolumn{3}{c}{\textbf{Image->Text}} & \multicolumn{3}{c}{\textbf{Text->Image}} & \multirow{2}{*}{\textbf{Params(M)}} & \multirow{2}{*}{\textbf{FLOPS(B)}}\tabularnewline

\cmidrule(lr){3-5} \cmidrule(lr){6-8}
& & $\boldsymbol{R@1}$  & $\boldsymbol{R@5}$ & $\boldsymbol{R@10}$ & $\boldsymbol{R@1}$  & $\boldsymbol{R@5}$ & $\boldsymbol{R@10}$ &  & \tabularnewline

\midrule 
\multirow{6}{*}{CLIP-Large}
&Uncompressed & 71.5 & 90.8 & 95.4 & 56.8 & 80.7 & 87.6 & $856.0_{\textcolor{red}{100\%}}$ & $395.7_{\textcolor{red}{100\%}}$\tabularnewline
\cmidrule(lr){2-10}
&\multirow{5}{*}{\textbf{\modelname}} & 73.7  & 92.5 & 96.2 & 55.6 & 79.1 & 85.7 & $807_{\textcolor{red}{94.3\%}}$  &$376.8_{\textcolor{red}{95.2\%}}$ \tabularnewline

& &71.9  & 91.6 & 95.6 & 55.5 & 79.3 & 86.3 & $757_{\textcolor{red}{88.4\%}}$ & $353.3_{\textcolor{red}{89.3\%}}$ \tabularnewline

& &70.4  & 90.7 & 95.3 & 55.5 & 80.6 & 87.8 & $699_{\textcolor{red}{81.7\%}}$ & $324.8_{\textcolor{red}{82.1\%}}$ \tabularnewline
& &70.8  & 90.9 & 95.5 & 54.6 & 80.1 & 87.4 & $650_{\textcolor{red}{75.9\%}}$ & $299.3_{\textcolor{red}{75.6\%}}$ \tabularnewline
& &70.3  & 90.5 & 95.3 & 52.5 & 78.8 & 86.4 & $532_{\textcolor{red}{62.1\%}}$ &$245.1 _{\textcolor{red}{61.9\%}}$\tabularnewline

\midrule 
\multirow{6}{*}{CLIP-Base}
&Uncompressed  & 52.5 & 76.4 & 84.2 & 33.0 & 57.9 & 68.7 & $299_{\textcolor{red}{100\%}}$ & $41.2_{\textcolor{red}{100\%}}$\tabularnewline
\cmidrule(lr){2-10}
&\multirow{5}{*}{\textbf{\modelname}}
&69.0  &89.6  & 94.8 & 84.5 & 78.5 & 86.7 & $278_{\textcolor{red}{93.0\%}}$ & $38.5_{\textcolor{red}{93.4\%}}$ \tabularnewline
& &65.9  & 88.6 & 94.1 & 50.0 & 77.3 & 86.1 & $257_{\textcolor{red}{86.0\%}}$ & $35.5_{\textcolor{red}{86.2\%}}$ \tabularnewline
& &61.9  & 86.4 & 93.1 & 47.0 & 75.6 & 84.9 & $234_{\textcolor{red}{78.3\%}}$& $31.9_{\textcolor{red}{77.4\%}}$ \tabularnewline
& &49.8  & 78.5 & 87.3 & 35.4 & 65.5 & 77.2 & $181_{\textcolor{red}{60.5\%}}$ &$22.9_{\textcolor{red}{55.6\%}}$ \tabularnewline
& &31.4  & 61.3 & 73.7 & 21.2 & 49.1 & 62.5 & $130_{\textcolor{red}{43.5\%}}$ &$13.6_{\textcolor{red}{33.0\%}}$ \tabularnewline

\bottomrule 
\end{tabular}

\end{table}

\noindent\textbf{Transformer weight family results of DeiT.}
Table \ref{tab:DeiTsIMNET} shows our transformer weight family results of DeiT on ImageNet-1k. The results show that our proposed method performed well on DeiT. For DeiT-Small, our method maintained accuracy at 80.2\% while reducing 29.5\% of the parameters. 
Compared to the other methods, WDpruning only achieved 78.6\% accuracy with a 29.6\% compression ratio. Only UPop~\citep{shi2023upop} achieved comparable results to ours. This indicates that our method of solution path is highly effective, preserving more parameter information even at higher compression ratio. 

Our method performed well on DeiT-Base. At almost the same compression ratio, our accuracy surpassed SCOP~\citep{tang2020scop} and PoWER~\citep{goyal2020power}, and we achieved a better compression ratio than IA-RED ~\citep{pan2021ia}, the smallest model among the compared algorithms.

\noindent\textbf{Transformer weight family results of CLIP.}
Apart from the classification task, we applied our method to image and text retrieval tasks. Using the CLIP backbone, we compress the transformer module to different sparsity levels, as shown in Table \ref{tab:CLIP-large}. Our method is robust, maintaining high recall under different compression ratios. Notably, our method excels in Image-to-Text retrieval, with only a 1\% performance drop while using around 60\% of the FLOPs of the full models. 

\modelname\ also demonstrates its capability for low-cost pruning. In the CLIP case, after a 6-epoch search stage, we obtained 5 sparse model architectures with different sparsity levels. Each sparse model only needs to be retrained for 5 epochs to achieve good performance. 

\noindent\textbf{Results of tiny models and datasets.} We also extended our method to the Tiny models like Swin-Tiny and DeiT-Tiny model. Table \ref{tab:DeiTsCIFAR} presents a comparison between our approach and the other method, ViT-Slim~\citep{Vit-slim}. Notably, our method achieves only a 0.8\% reduction in accuracy while compressing 5\% more than ViT-Slim~\citep{Vit-slim}. The success of our method with the Swin-T model, which demonstrates its adaptability across various transformer models. 

For DeiT-Tiny, at almost the same compression ratio, our accuracy surpassed SSP~\citep{chen2021chasing} and $\text{S}^{\mathrm{2}}$ViTE~\citep{chen2021chasing} by around 2\% points, and we achieved a better compression ratio than GOHSP~\citep{yin2023gohsp}, the smallest model among the compared algorithms. 

On \textit{CIFAR-10} dataset, as shown in Table \ref{tab:DeiTsCIFAR}, our method maintained a reasonably good accuracy even at higher compression ratio, achieving better accuracy with a smaller model than ViT-Slim, demonstrating its effectiveness across various datasets.

\begin{table*}\small
  \centering
    \vspace{-0.4cm}
    \caption{Comparisons with training from scratch method on pruning DeiT models.}
  \label{tab:DeiTs_slbi}
  \begin{tabular}{ll c c c c}
    \toprule
       \textbf{Model} & \textbf{Method}  & \textbf{Top-1(\%)} & \textbf{Top-5(\%)} & \textbf{FLOPS(B)} & \textbf{Params(M)} \\
    \midrule
        \multirow{3}{*}{DeiT-Small}     
        & Uncompressed       & 79.8  & 95.0 & $4.6_{\textcolor{red}{100\%}}$  & $22.1_{\textcolor{red}{100\%}}$ \\
        & DessiLBI  &78.9 &94.2  &$3.2_{\textcolor{red}{69.6\%}}$ &$15.2_{\textcolor{red}{68.8\%}}$ \\
        & \textbf{\modelname} & \textbf{80.2} & \textbf{95.1} &  $3.4_{\textcolor{red}{73.9\%}}$  & $15.6_{\textcolor{red}{70.6\%}}$ \\
    \midrule
        \multirow{3}{*}{DeiT-Tiny}
        & Uncompressed    & 72.2  & 91.1 & $1.3_{\textcolor{red}{100\%}}$  & $5.7_{\textcolor{red}{100\%}}$ \\
        & DessiLBI   & 71.8 &90.8 &\textbf{$1.0_{\textcolor{red}{ 76.9\%}}$} &\textbf{$4.0_{\textcolor{red}{ 70.2\%}}$} \\
        & \textbf{\modelname} & \textbf{72.3} & \textbf{91.1} &  \textbf{$0.8_{\textcolor{red}{ 61.5\%}}$} & \textbf{$4.0_{\textcolor{red}{70.2\%}}$} \\
    \midrule
        \multirow{3}{*}{Swin-Tiny}
        &Uncompressed  & 81.2 & 95.5 & $4.5_{\textcolor{red}{100\%}}$ & $28_{\textcolor{red}{100\%}}$  \\
        &DessiLBI  & 80.4 & 95.2 &\textbf{\small{}$3.4_{\textcolor{red}{75.6\%}}$} &\textbf{\small{}$18.3_{\textcolor{red}{65.4\%}}$ }  \\
        &\textbf{\small{}\modelname} & \textbf{\small{}80.6} & \textbf{\small{}95.2} & \textbf{\small{}$3.4_{\textcolor{red}{75.6\%}}$} & \textbf{\small{}$18.5_{\textcolor{red}{66.1\%}}$ }\\
    \bottomrule
  \end{tabular}
  
  \end{table*}

\subsection{Ablation studies}

\noindent\textbf{Ablation of training from scratch method.}
To highlight the improvements of our method over training from scratch, we conducted experiments using the DessiLBI method, as shown in Table \ref{tab:DeiTs_slbi}. Both two methods finetuned from the same pretrained model. While the DessiLBI method can be applied to transformers, there was still a noticeable performance gap compared to our approach. This clearly demonstrated that our method significantly enhances the differential inclusion pruning technique. 
\subsection{Further studies}

\noindent\textbf{The consistency among family.} We plotted the solution path of $\Gamma$ as shown in Figure~\ref{fig:visual_of_solutionpath}. The $\Gamma$ parameters do not return to zero during training, which shows that the weights within the same family remain consistent. Such consistency among weights allow us to effectively analyze the performance of sparse models.

\begin{table}\small
\begin{center}
  \centering
\caption{Results of pruning LLMs. We pruned the Llama2-7B and OPT-6.7B model with \textbf{50\% sparsity}, then evaluated the pruned model on 6 datasets.    \label{tab:Extention_LLMs}}
    \setlength{\tabcolsep}{1.5pt}
    \begin{tabular}{llccccccc}
    \toprule
        Model& Method &Calib data & ARC-c(\%) & ARC-e(\%) & BoolQ(\%) & RTE(\%) & SST(\%)  \\
        \midrule
        \multirow{7}{*}{Llama2-7B }
        &Uncompressed& - & 43.52& 76.26& 77.71& 62.82& 51.95 \\
        &RIA&C4&38.40&71.59&75.60&54.51&49.77 \\
        &RIA&Wikitext2& 37.97&71.68&75.17&55.96&50.57\\
        &Wanda&C4&37.03&69.70&74.01&55.23&\textbf{53.10}\\
        &Wanda&Wikitext2&37.29&69.65&74.28&\textbf{57.04}&51.72\\
        &\textbf{SPP}&C4&\textbf{38.57}&\textbf{71.80}&\textbf{75.96}&55.96&49.66 \\
        &\textbf{SPP}&Wikitext2& 37.88&71.59&74.28&54.51&50.11\\

        \midrule
        \multirow{7}{*}{OPT-6.7B}
        &Uncompressed&- & 30.46&65.57&66.06&55.23&76.61\\
        &RIA&C4&\textbf{29.27}&63.68&\textbf{66.82}&53.07&61.81\\
        &RIA&Wikitext2& 29.18&\textbf{64.10}&63.55&52.71&76.49\\
        &Wanda&C4&27.39&57.45&63.88&50.90&\textbf{78.21}\\
        &Wanda&Wikitext2&26.11&56.40&62.20&\textbf{53.43}&62.96\\
        &\textbf{SPP}&C4&28.75&63.76&63.15&52.71&74.66\\
        &\textbf{SPP}&Wikitext2& 28.67&63.80&63.18&52.71&75.34\\
    \bottomrule
    \end{tabular}
\end{center}
    
\end{table}
\begin{figure}
    \vspace{-0.5cm}
    \centering
    \setlength{\abovecaptionskip}{-0.2cm}   
    \setlength{\belowcaptionskip}{-0.3cm}
    \includegraphics[width=1.0\linewidth]{./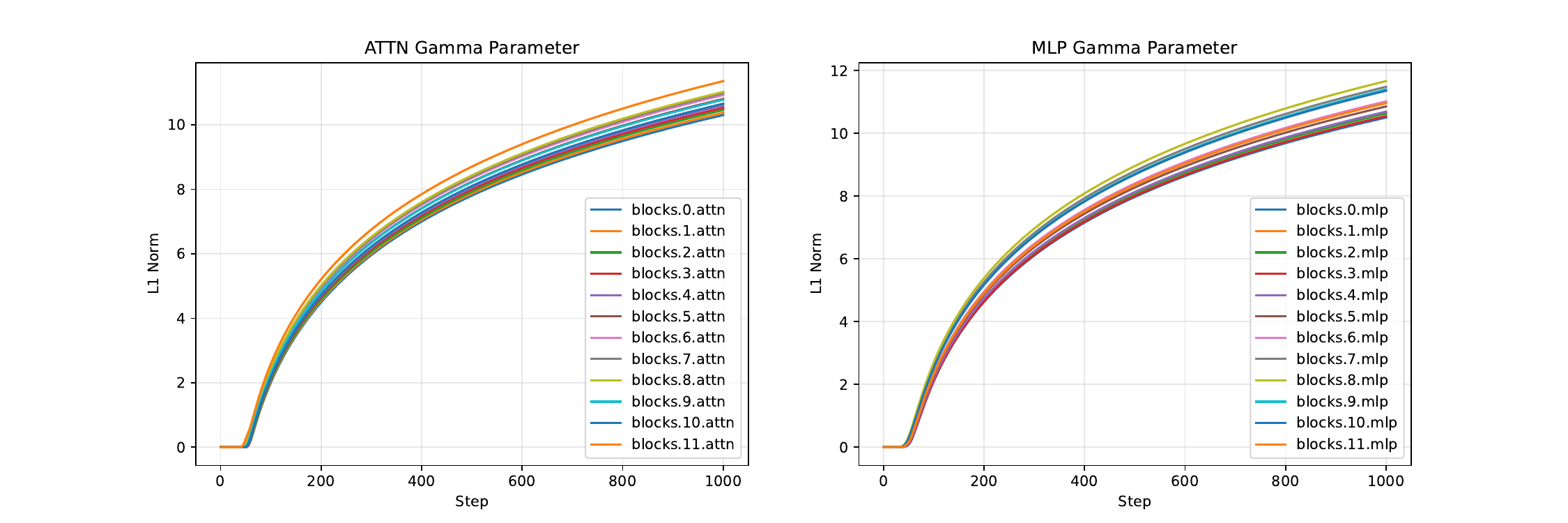}  
    \caption{Visualization of solution path of DeiT-small. We show the changes of the L1-norm of projected weight value $\Gamma $ during the search stage. The x-axis is the iteration number during training, the y-axis is the L1-norm of the $\Gamma$ parameters per layer. }
    \label{fig:visual_of_solutionpath}
\end{figure}

\noindent\textbf{Extentions to LLMs.} 
We extended our method to large language models (LLMs) with post-training. We applied our solution path method during the LLM pruning search stage, combining it with the RIA ~\citep{zhang2024plug} pruning metric. The detailed algorithm is shown in Algorithm~\ref{alg:llm_alg} . 

We applied our method on Llama2-7B and OPT-6.7b. The calibration datasets C4 and Wikitext2 were used to generate activations during the forward pass, which, along with weight magnitude, served as the pruning metric. The results were reported on 5 datasets. As shown in Table~\ref{tab:Extention_LLMs}, the accuracy after pruning with 50\% sparsity is comparable to the advanced method RIA and Wanda~\citep{sun2023simple}, indicating the potential our method in pruning LLMs.

\section{Conclusion}
\label{sec:Conclusion}

We proposed a dynamic approach based on differential inclusion, which can adaptively prune any pre-trained transformers with various compression ratios. Along this path, a series of models, named the Transformer Weight Family, was derived from the masks in the solution path. With just a single run of iteration, we can achieve all sparsity levels of the original pre-trained model. We have demonstrated the stability and consistency of the Transformer Weight Family, showing that the solution path method is robust. We also demonstrated the potential of our method for pruning large language models (LLMs). In future work, we will explore this direction with a more tailored design suited to the architecture and post-training processes of LLMs.

\section*{ACKNOWLEDGEMENTS}
This work was supported by the Science and Technology Commission of Shanghai Municipality(No. 24511103100).
{{
\bibliography{main}
\bibliographystyle{iclr2025_conference}

}}

\newpage
\appendix

\section*{Appendix}

\section{Experiments details and visualization}

\textbf{Experiments details}

As shown in Algorithm ~\ref{alg:llm_alg}, instead of directly using our algorithm, we combined it with the RIA~\cite{zhang2024plug} pruning method. This is because post-training tasks are quite different from fine-tuning tasks. To maintain the generalization performance of LLMs, we need to take the size of the weight parameters and activations into account.

We used 4 A100 GPU with memory size of 80GB for those experiments. The search stage contain an update stage and prune stage, both just need to run once for one model. All the finetune stage used AdamW as the optimizer and consine scheduler as the learning rate scheduler. The hyperparameters are listed in Table.~\ref{tab:exp_details} and Table.~\ref{tab:exp_details_update}.
\begin{table}[hb]\small
\setlength{\tabcolsep}{2.5pt} 
    \centering
    \begin{tabular}{ccccccc}
    \toprule 
    Model & Datasets & Updating Epochs & Pruning Epochs  & Finetuning Epochs & Batch Size & LR\tabularnewline
    \midrule 
    DeiT-Base & Imagenet-1k & 5 & 20 & 300 & 1024 & 8e-4\tabularnewline
    DeiT-Small & Imagenet-1k & 10 & 30  & 300 & 512 & 8e-4\tabularnewline
    DeiT-Tiny & Imagenet-1k & 10 & 30 & 300 & 256 & 8e-4\tabularnewline
    Swin-Tiny & Imagenet-1k & 5 & 20 & 300 & 256 & 8e-4\tabularnewline
    CLip-Large & COCO & 1 & 5  & 5 & 32 & 1e-5\tabularnewline
    CLip-Base & COCO & 3 & 5  & 5 & 32 & 1e-5\tabularnewline
    \bottomrule
    \end{tabular}
    \caption{The hyperparameters of experiments mentioned above.}
    \label{tab:exp_details}
\end{table}
\begin{algorithm}[hb]
   \caption{Extention to LLMs}
   \label{alg:llm_alg}
    \begin{algorithmic}
        \STATE Perform searching
       \STATE {\bfseries Input:} Pretrained weight $W_0$ and a step size \( \alpha\), iteration steps in the update stage $T_s$ and prune stage $T_p$ \\
       \STATE Initialize sub-gradient $V_{0}=0$, mask $M_0=1$, sparse mask $\Gamma_0=0$.
       \STATE Set $\lambda =\lambda_0\left( \frac{|\mathbf{W}_{ij}|}{\sum |\mathbf{W}_{*j}|} + \frac{|\mathbf{W}_{ij}|}{\sum |\mathbf{W}_{i*}|} \right) \times \left( \|\mathbf{X}_i\|_2 \right)$,which is the pruning metric of RIA~\cite{zhang2024plug}\\
       \FOR{$k=0$ {\bfseries to} $T_s$}
       \STATE\textcolor{gray}{\texttt{\# Calculate the loss}}
       \STATE $\hat{L}=L(W_{0}\odot M_{k})+\frac{1}{2\nu} \|M_k - \Gamma_k\|_2^2$
       \STATE \textcolor{gray}{\texttt{\# update $V_{k}$ and mask $M_{k}$ according to sub-gradient}}
       \STATE $M_{k+1}=M_{k}-\kappa \alpha_k \nabla_{M_{k}}\hat{L}$
       \STATE $V_{k+1}=V_{k}-\alpha_k \nabla_{\Gamma_{k}}\hat{L}$
       \STATE \textcolor{gray}{\texttt{\# update $\Gamma_{k}$ as the proximal operator}}

       \STATE $h(\Gamma)=\lambda\left\Vert \Gamma\right\Vert _{1}+I_{[0,1]}(\Gamma)$
       \STATE $\Gamma_{k+1} = \text{Prox}_{h}\left(V_{k+1}\right)$, where $\text{Prox}(V)=argmin_{\Gamma}\left\{ \left\Vert V-\Gamma\right\Vert _{2}^{2}/2+h(\Gamma)\right\}$
       \ENDFOR 
       \STATE Perform pruning
       \FOR{$k=0$ {\bfseries to} $T_p$}
       \STATE \textcolor{gray}{\texttt{\# Update masks with a reverse turn of $\Gamma$ being non-zero in searching}}
       \STATE $M_{k+1}=\Gamma_{\hat{k}},\hat{k}=int(T_s - (k+1)\frac{T_s}{T_p})$
       \STATE \textcolor{gray}{\texttt{\# Update and save the sparse model weights}}
       \STATE $\bar{W}_{k+1}=W_{k+1}\odot M_{k+1}$
       \STATE \textcolor{gray}{\texttt{\#Save the checkpoint of $\bar{W}_{k+1}$ as the pruned model}}
       \ENDFOR
       \STATE \textcolor{gray}{\texttt{\# Return Weight Family for the model}}
       \STATE {\bfseries Output:} LLM Weight Family:$\{\bar{W}_{i} | i \in [0,T_{p}] \}$ 
    \end{algorithmic}
\end{algorithm}
\begin{table*}[h]
\centering

\begin{tabular}{cc}
\centering
    \begin{tabular}{ccccc}
    \toprule 
    Model& Method & Latency Time &Params &FLOPS(B)\tabularnewline
    
    \midrule 
    \multirow{6}{*}{CLip-Base}
    & Uncompressed  & 9.273s & $299_{\textcolor{red}{100\%}}$ & $41.2_{\textcolor{red}{100\%}}$\tabularnewline
    \midrule 
    
    &\multirow{4}{*}{\modelname} 
    &8.675s & $278_{\textcolor{red}{93.0\%}}$ & $38.5_{\textcolor{red}{93.4\%}}$ \tabularnewline
    & &7.859s & $257_{\textcolor{red}{86.0\%}}$ & $35.5_{\textcolor{red}{86.2\%}}$ \tabularnewline
    & &7.341s  & $234_{\textcolor{red}{78.3\%}}$& $31.9_{\textcolor{red}{77.4\%}}$ \tabularnewline
    & &6.490s & $181_{\textcolor{red}{60.5\%}}$ &$22.9_{\textcolor{red}{55.6\%}}$ \tabularnewline
    & &5.461s & $130_{\textcolor{red}{43.5\%}}$ &$13.6_{\textcolor{red}{33.0\%}}$ \tabularnewline
    \bottomrule 
    \end{tabular}
    
\end{tabular}
\caption{The latency results of pruned model.}
\end{table*}

\begin{table}[hb]\small
    \centering
    \begin{tabular}{ccccccc}
    \toprule 
    Model & $\kappa$ & $\lambda$ & Epochs  & LR \tabularnewline
    \midrule 
    DeiT-Base & 1 & 15 & 5 & 8e-4\tabularnewline
    DeiT-Small & 1 &15 & 10& 8e-4\tabularnewline
    DeiT-Tiny & 1 &15 &10 & 1.6e-3\tabularnewline
    Swin-Tiny & 1 &15 & 5 & 8e-4\tabularnewline
    CLip-Large & 100 & 3 & 1  & 1e-5\tabularnewline
    CLip-Base & 100 & 3 &3 & 1e-5\tabularnewline
    \bottomrule
    \end{tabular}
    \caption{The hyperparameters of expriments mentioned above in updating stage.}
    \label{tab:exp_details_update}
\end{table}

\begin{table}
\centering
\begin{tabular}{ccccc}
\toprule 
Model& Method & Latency Time &Params & FLOPS(B)\tabularnewline

\midrule 
\multirow{6}{*}{CLIP-base}
& Uncompressed  & 9.273s & $299_{\textcolor{red}{100\%}}$ & $41.2_{\textcolor{red}{100\%}}$\tabularnewline
\midrule 

&\multirow{4}{*}{\modelname} 
&8.675s & $278_{\textcolor{red}{93.0\%}}$ & $38.5_{\textcolor{red}{93.4\%}}$ \tabularnewline
& &7.859s & $257_{\textcolor{red}{86.0\%}}$ & $35.5_{\textcolor{red}{86.2\%}}$ \tabularnewline
& &7.341s  & $234_{\textcolor{red}{78.3\%}}$& $31.9_{\textcolor{red}{77.4\%}}$ \tabularnewline
& &6.490s & $181_{\textcolor{red}{60.5\%}}$ &$22.9_{\textcolor{red}{55.6\%}}$ \tabularnewline
& &5.461s & $130_{\textcolor{red}{43.5\%}}$ &$13.6_{\textcolor{red}{33.0\%}}$ \tabularnewline

\bottomrule 
\end{tabular}
  \caption{The latency time results of compressed CLIP-base\cite{touvron2021training} models.}
  \label{tab:latency_test}
\end{table}

\textbf{Visualization of compressed model}
\begin{figure}[ht]          
    \centering
\includegraphics[width=0.45\linewidth]{./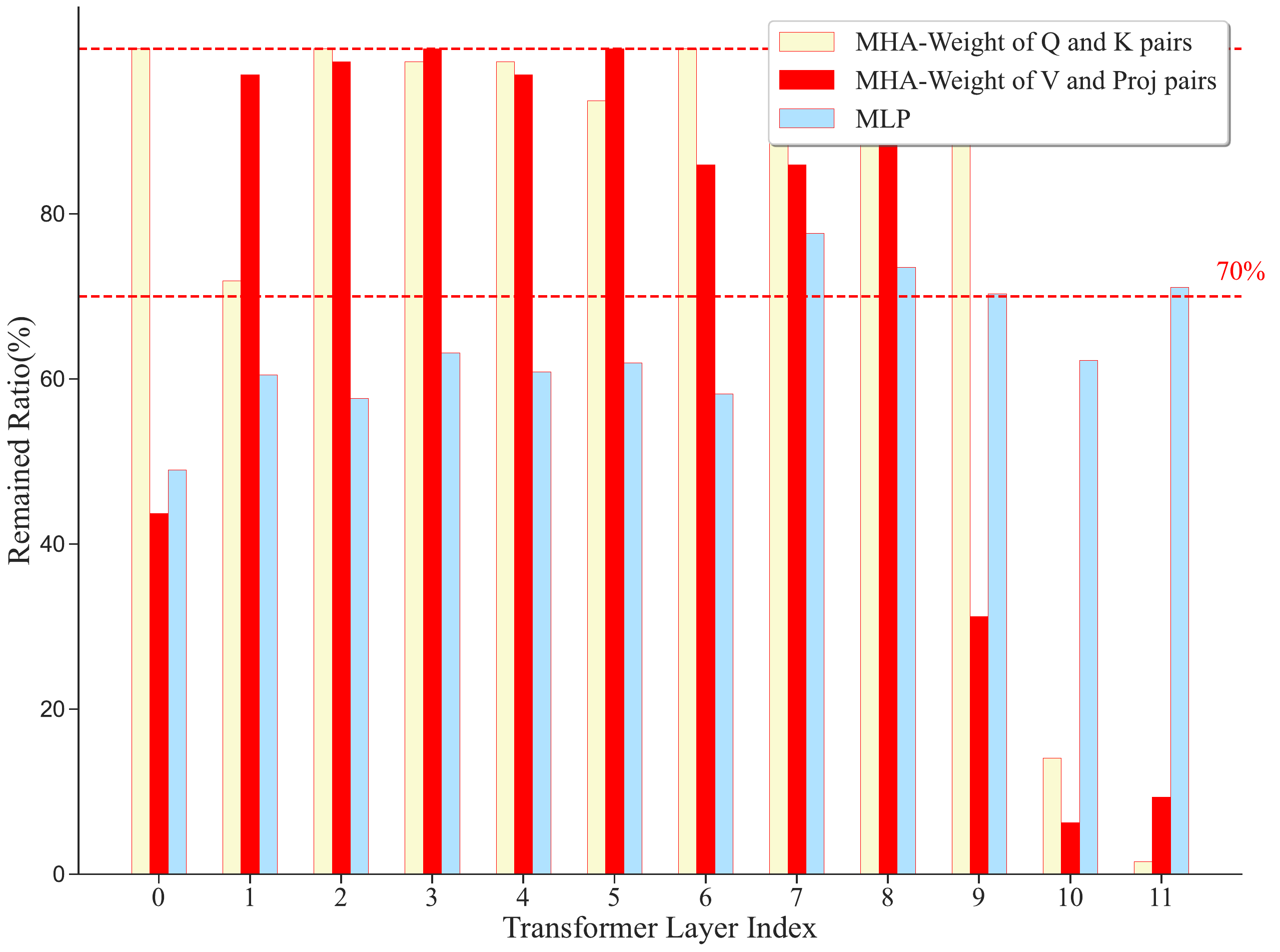}  
\vspace{-0.1in}
\caption{Visualization of the proportion of parameters on DeiT-Small. The three kind of color indicate three pairs of weight.\label{fig:visual_of_com}}     
\vspace{-0.1in}
\end{figure}

\begin{figure}[ht]          
    \centering
\includegraphics[width=1.0\linewidth]{./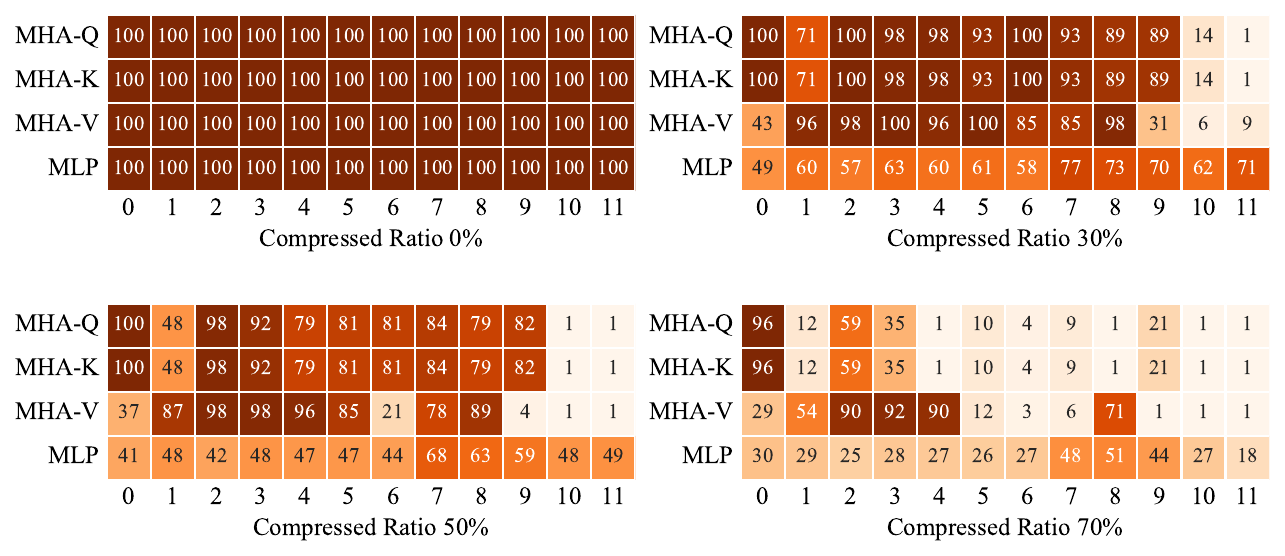} 
\vspace{-0.2in}
\caption{Visualization of components. The depth of color shows the sparsity of the corresponding layer. The number shows the dim of the linear matrix.\label{fig:visual2}}     
\end{figure}

In Fig. \ref{fig:visual_of_com} and Fig \ref{fig:visual2}, we visualized the proportion of parameters retained in each layer of the DeiT-Small model at a set compression ratio of 0.3. Notably, since the query, key, and value, as well as the output projection, form two pairs with equivalent parameter quantities, we treated them as a unit. It's observed that in the shallower layer 1 and deeper layer 10, the proportion of saved parameters in the QK pair is significantly higher than that in the V and Project pair. This indicates that our low-rank pruning method is effective. It skillfully segregates parameters, allowing those with closer relationships to be pruned together.

\subsection{Group Lasso}
Our algorithm \ref{alg:main_alg} enhances structural sparsity within transformer layers, aligning under a group lasso penalty framework, $\Omega_1(\Gamma) = \sum_g \|\Gamma^g\|_2$, where 
\begin{equation}
    \|\Gamma^g\|_2 = \sqrt{\sum_{i=1}^{|\Gamma^g|} (\Gamma^g_i)^2}
\end{equation}
and $|\Gamma^g|$ represents the number of weights in each group $\Gamma^g$. Therefore, we have a clear formula for solving this equation:
\begin{equation}
    \label{Eq:proximal}
    \Gamma^g = \kappa \cdot \max(0, 1 - \frac{1}{\|\Gamma^g\|_2}) V^g. 
\end{equation}

\subsection{More Related Works}

\textbf{Mirror Descent Algorithm (MDA)} was first proposed by \cite{nemirovskij1983problem} to solve constrained convex optimization, and can be seen as a generalized projected gradient descent \cite{beck2003mirror} using \textbf{Bregman distance} $B_\Omega (u, v) := \Omega(u) - \Omega(v) - \langle \nabla \Omega(v), u - v \rangle$ induced by a convex and differentiable function $\Omega(.)$,
\begin{subequations}
    \begin{align}
    Z_{k+1} &= Z_k - \alpha \nabla L(W_k)  \label{Eq:mda1}\\
    W_{k+1} &= \nabla \Omega^* (Z_{k+1}) \label{Eq:mda2}
\end{align}
\end{subequations}
where the conjugate function of $\Omega(.)$ is $\Omega^* (Z) := \sup_{W, Z} \{ \langle W, Z \rangle - \Omega(W) \}$. Equation (1) optimizes $W_{k+1}$ in two steps: Eq.\ref{Eq:mda1} implements the gradient descent on $Z$ in the dual space $Z_k = \nabla \Omega(W_k);$ and Eq.\ref{Eq:mda2} projects it back to the primal space. As step size $\alpha \to 0$, MDA converges to the following ordinary differential equation (ODE) dynamics\cite{su2016differential}:
\begin{align}
    \dot{Z}_t &= \alpha \nabla \mathcal{L}(W_t) \tag{2a} \\
    W_t &= \nabla \Omega^* (Z_t) \tag{2b}
\end{align}
\textbf{Compared with DessiLBI}
Both our method and DessiLBI use the mirror descent algorithm to get the solution path, with the sparse model obtained through early stopping. However, our method is specifically designed for \textit{pre-trained models}, while DessiLBI, which trains from scratch, doesn't perform well on them. During the search stage, we keep the pre-trained model's weights fixed and only update the mask parameters, making our approach more suitable for tasks like pruning LLMs. Additionally, our method uses a pair-wise mask architecture, which works for fully connected layers in transformers, but is not applicable to CNN architectures.

\section{Proof of theorem \ref{Thm:conv-SLBI}}

\label{sc:proof}

First of all, we reformulate Eq.(\ref{Eq:SLBI-reformulation}) into
an equivalent form. Without loss of generality, consider $\Omega=\Omega_1$ in the sequel. 
Denote $R(P):=\Omega(\Gamma)$, then Eq.(\ref{Eq:SLBI-reformulation})
can be rewritten as:
\begin{subequations} 
\begin{align}
 & P_{k+1}=\mathrm{Prox}_{\kappa R}(P_{k}+\kappa(p_{k}-\alpha\nabla\bar{\cal L}(P_{k}))), \label{Eq:SLBI-reform2-iter1}\\
 & p_{k+1}=p_{k}-\kappa^{-1}(P_{k+1}-P_{k}+\kappa\alpha\nabla\bar{\mathcal{L}}(P_{k})),\label{Eq:SLBI-reform2-iter2}
\end{align}
\end{subequations} where $p_{k}=[0,g_{k}]^{T}\in\partial R(P_{k})$
and $g_{k}\in\partial\Omega(\Gamma_{k})$. Thus Algorithm is equivalent
to the following iterations, 
\begin{subequations} 
\begin{align}
 & M_{k+1}=M_{k}-\kappa\alpha\nabla_{W}\bar{\mathcal{L}}(M_{k},\Gamma_{k}),\label{Eq:SLBI-reform-iter1}\\
 & \Gamma_{k+1}=\mathrm{Prox}_{\kappa\Omega}(\Gamma_{k}+\kappa(g_{k}-\alpha\nabla_{\Gamma}\bar{\mathcal{L}}(M_{k},\Gamma_{k}))),\label{Eq:SLBI-reform-iter2}\\
 & g_{k+1}=g_{k}-\kappa^{-1}(\Gamma_{k+1}-\Gamma_{k}+\kappa\alpha\cdot\nabla_{\Gamma}\bar{\mathcal{L}}(M_{k},\Gamma_{k})).\label{Eq:SLBI-reform-iter3}
\end{align}
\end{subequations} 
Exploiting the equivalent reformulation (\ref{Eq:SLBI-reform-iter1}-\ref{Eq:SLBI-reform-iter3}),
one can establish the global convergence of $(M_{k},\Gamma_{k},g_{k})$ based on the Kurdyka-{\L }ojasiewicz framework. In this section, the following extended version
of Theorem \ref{Thm:conv-SLBI} is actually proved. 
 
\begin{thm}{[}Global Convergence of Our method{]} \label{Thm:conv-SLBI+} Suppose that Assumption
\ref{Assumption} holds. Let $(M_{k},\Gamma_{k},g_{k})$ be the sequence
generated by Our method (Eq. (\ref{Eq:SLBI-reform-iter1}-\ref{Eq:SLBI-reform-iter3}))
with a finite initialization. If 
\begin{align*}
0<\alpha_{k}=\alpha<\frac{2}{\kappa(Lip*C+\nu^{-1})},
\end{align*}
then $(M_{k},\Gamma_{k},g_{k})$ converges to a critical point of
$F$. Moreover, $\{(M_{k},\Gamma_{k})\}$ converges to a stationary
point of $\bar{\mathcal{L}}$ defined in Eq. \ref{Eq:sparse_loss},
and $\{M_{k}\}$ converges to a stationary point of $\mathcal{L}(M)$. 
\end{thm}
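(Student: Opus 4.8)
The plan is to follow the Kurdyka--{\L}ojasiewicz (KL) recipe for nonconvex iterative schemes, as in the Block Coordinate Descent analysis of \citet{Zeng2019}, but adapted to the mirror-descent/linearized Bregman structure of our iteration. The three ingredients one must verify are: (i) a \emph{sufficient decrease} property of the Lyapunov function $F(P,\tilde g)$ from Eq.~\eqref{Eq:Lyapunov-fun} along the sequence $(M_k,\Gamma_k,g_k)$; (ii) a \emph{relative error} (subgradient) bound, i.e.\ that some element of $\partial F$ at the current iterate is controlled by the successive differences $\|M_{k+1}-M_k\|$ and $\|\Gamma_{k+1}-\Gamma_k\|$; and (iii) boundedness of the sequence together with the KL property on the bounded set (the latter is assumed in Condition~\ref{Assumption}(e)). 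Once these three hold, the abstract KL convergence machinery gives that the whole sequence converges to a single critical point of $F$, from which one reads off that $(M_k,\Gamma_k)$ converges to a stationary point of $\bar{\mathcal L}$ and $M_k$ to a stationary point of $\mathcal L$.

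First I would fix notation by using the equivalent reformulation \eqref{Eq:SLBI-reform-iter1}--\eqref{Eq:SLBI-reform-iter3}, which exposes the iteration as an inexact proximal/mirror step on the $\Gamma$-block and a plain gradient step on the $M$-block. For the sufficient-decrease step, I would expand $F(P_{k+1},\tilde g_{k+1}) - F(P_k,\tilde g_k)$ into the loss part $\alpha(\bar{\mathcal L}(M_{k+1},\Gamma_{k+1}) - \bar{\mathcal L}(M_k,\Gamma_k))$ plus the Bregman part $B_\Omega^{\tilde g_{k+1}}(\Gamma_{k+1},\tilde\Gamma_{k+1}) - B_\Omega^{\tilde g_k}(\Gamma_k,\tilde\Gamma_k)$. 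On the loss part, use Lipschitz continuity of $\nabla \mathcal L$ (Condition~\ref{Assumption}(a)), together with the fact that the mask enters as $W_0\odot M$ so the effective Lipschitz constant of $M\mapsto \mathcal L(W_0\odot M)$ is $Lip\cdot C$ with $C=\max|W_0|$, and that the quadratic coupling $\tfrac{\rho}{2}\|M-\Gamma\|^2$ (with $\rho=\nu^{-1}$ in the algorithm's notation) contributes the $\nu^{-1}$ term; this is exactly where the step-size restriction $\alpha < 2/(\kappa(Lip\cdot C + \nu^{-1}))$ is consumed to produce a negative definite quadratic form in the increments. The Bregman/subgradient bookkeeping for the $\Gamma$-block should telescope using the three-point identity for Bregman divergences and the update rule \eqref{Eq:SLBI-reform-iter3} for $g_k$; this is the part where the mirror-descent structure differs from standard BCD and where a new Lyapunov function (rather than the splitting loss of \citet{Zeng2019}) is genuinely needed. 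The outcome should be $F(P_{k+1},\tilde g_{k+1}) \le F(P_k,\tilde g_k) - c(\|M_{k+1}-M_k\|^2 + \|\Gamma_{k+1}-\Gamma_k\|^2)$ for some $c>0$.

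Next, summing the decrease inequality and using lower-boundedness of $\mathcal L$ (Condition~\ref{Assumption}(c)) and boundedness of level sets (Condition~\ref{Assumption}(b)) gives $\sum_k (\|M_{k+1}-M_k\|^2 + \|\Gamma_{k+1}-\Gamma_k\|^2) < \infty$, hence the increments vanish, and the sequence is bounded (boundedness of $g_k$ follows from Condition~\ref{Assumption}(d), locally bounded subgradients of $\Omega$). For the relative-error bound, I would write the optimality condition of the proximal step \eqref{Eq:SLBI-reform-iter2} and the definition of the gradient step \eqref{Eq:SLBI-reform-iter1}, reassemble them into a membership $s_{k+1}\in\partial F(P_{k+1},\tilde g_{k+1})$ with $\|s_{k+1}\| \le c'(\|M_{k+1}-M_k\| + \|\Gamma_{k+1}-\Gamma_k\|)$; the Lipschitz bound on $\nabla\mathcal L$ restricted to the bounded set is used here again. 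With sufficient decrease, relative error, and the KL property (Condition~\ref{Assumption}(e)) in hand, the standard argument (e.g.\ Attouch--Bolte--Svaiter style, as used in \citet{Zeng2019}) shows $\sum_k \|P_{k+1}-P_k\| < \infty$, so the whole sequence converges to a point $(M_\infty,\Gamma_\infty,g_\infty)$; passing to the limit in the optimality conditions identifies it as a critical point of $F$, and then of $\bar{\mathcal L}$, and $M_\infty$ as a stationary point of $\mathcal L$. The main obstacle I anticipate is step (i)--(ii) for the $\Gamma$-block: getting the Bregman-divergence terms to telescope cleanly against the $g_k$-update and verifying that the resulting quadratic form is negative definite under precisely the stated step-size bound (in particular correctly tracking the roles of $\kappa$, $\rho=\nu^{-1}$, and $C=\max|W_0|$) is the delicate computation; the remaining KL bootstrapping is then routine given the cited framework.
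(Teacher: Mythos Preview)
Your proposal is correct and follows essentially the same approach as the paper: the paper also proceeds via the three-ingredient KL recipe (sufficient descent of the Lyapunov function $F$ along $Q_k=(P_k,g_{k-1})$, a relative-error bound $\|H_{k+1}\|\le \rho_1\|Q_{k+1}-Q_k\|$ with $H_{k+1}\in\partial F(Q_{k+1})$, and boundedness via level sets and locally bounded subgradients), then invokes the Attouch--Bolte--Svaiter machinery, and finally reads off from the critical-point equations of $F$ that $\bar M=\bar\Gamma$ and hence $\nabla\bar{\mathcal L}(\bar M,\bar\Gamma)=0$ and $\nabla\mathcal L(\bar M)=0$. Your identification of the effective Lipschitz constant $Lip\cdot C+\nu^{-1}$, the resulting decrease constant $\kappa^{-1}-\tfrac{\alpha}{2}(Lip\cdot C+\nu^{-1})$, and the Bregman three-point telescoping for the $\Gamma$-block matches the paper's computations exactly.
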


\begin{remark} 
Assumption \ref{Assumption} (a)-(c) are regular in
	the analysis of nonconvex algorithm (see, \cite{Attouch2013} for
	instance), while Assumption \ref{Assumption} (d) is also mild including
	all Lipschitz continuous convex function over a compact set. Some
	typical examples satisfying Assumption \ref{Assumption}(d) are the
	$\ell_{1}$ norm, group $\ell_{1}$ norm, and every continuously differentiable
	penalties. By Eq. (\ref{Eq:Lyapunov-fun}) and the definition of conjugate,
	the Lyapunov function $F$ can be rewritten as follows, 
	\begin{align}
	F(W,\Gamma,g)=\alpha\bar{\cal L}(W,\Gamma)+\Omega(\Gamma)+\Omega^{*}(g)-\langle\Gamma,g\rangle.\label{Eq:Lyapunov-fun-conjugate}
	\end{align}
\end{remark}
\vspace{-0.1in}
Applying to the neural networks, typical examples are summarized in the following corollary.
\begin{corollary} 
\label{Corollary:DL} Let $\{M_{k},{\Gamma}_{k},g_{k}\}$
	be a sequence generated by Our method (\ref{Eq:SLBI-reform-iter1}-\ref{Eq:SLBI-reform-iter3})
	for neural network training 
	where (a) $\ell$ is any smooth definable loss function, such as the
	square loss $(t^{2})$, exponential loss $(e^{t})$, logistic loss
	$\log(1+e^{-t})$, and cross-entropy loss; (b) $\sigma_{i}$ is any
	smooth definable activation, such as linear activation $(t)$, sigmoid
	$(\frac{1}{1+e^{-t}})$, hyperbolic tangent $(\frac{e^{t}-e^{-t}}{e^{t}+e^{-t}})$,
	and softplus ($\frac{1}{c}\log(1+e^{ct})$ for some $c>0$) as a smooth
	approximation of ReLU; (c) $\Omega$ is the group Lasso. 

	Then the sequence $\{M_{k}\}$ converges to a stationary point of
	$L(M)$ under the conditions of Theorem \ref{Thm:conv-SLBI}. 
\end{corollary}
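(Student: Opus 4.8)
The plan is to obtain Corollary~\ref{Corollary:DL} as a direct specialization of Theorem~\ref{Thm:conv-SLBI} (equivalently its extended form Theorem~\ref{Thm:conv-SLBI+}): once the structural hypotheses of Condition~\ref{Assumption} are verified for the stated loss/activation/penalty triple, the conclusion that $\{M_k\}$ converges to a stationary point of $L(M)$ follows verbatim from that theorem, since its step-size requirement is already assumed in the corollary. Thus the entire content to be proved is the verification of Condition~\ref{Assumption}(a)--(e) in this concrete neural-network setting, and the only nontrivial item is the Kurdyka-{\L}ojasiewicz (KL) requirement (e). I would organize the proof as a checklist over (a)--(e), devoting the bulk of the work to (e).

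For (a)--(d) I would argue from smoothness, finite composition, and convexity. Each listed loss $\ell$ and each activation $\sigma_i$ is $C^\infty$ with locally Lipschitz derivatives, and $\Phi_W$ is a finite alternating composition of affine maps and such activations; hence $L(W)=\frac1n\sum_i \ell(y_i,\Phi_W(x_i))$ is continuously differentiable with gradient Lipschitz on bounded sets, giving (a). Lower boundedness (c) is immediate for the nonnegative square/logistic/cross-entropy losses (or after an additive constant), and bounded level sets (b) follow because the iterates of interest live in the compact mask domain $[0,1]$ while $W_0$ is fixed, so the relevant sublevel sets are bounded. For (d), the group-Lasso penalty $\Omega_1(\Gamma)=\sum_g\|\Gamma^g\|_2$ is proper, lower semicontinuous, and convex, and its subdifferential is contained in a product of unit Euclidean balls, so it is globally, hence locally, bounded.

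The crux is (e): that the Lyapunov function $F$ of Eq.~\ref{Eq:Lyapunov-fun-conjugate} is a KL function on every bounded set. I would establish this through \emph{o-minimal definability}, invoking the theorem of Bolte--Daniilidis--Lewis--Shiota that every proper lower semicontinuous function definable in an o-minimal structure satisfies the KL inequality, with a definable desingularizing function, on any bounded set. Concretely I would fix the o-minimal structure $\mathbb{R}_{\exp}$ (o-minimal by Wilkie's theorem), in which $\exp$, $\log$, and all polynomials are definable, so that each listed activation (sigmoid, tanh, softplus) and each listed loss (square, exponential, logistic, cross-entropy) is definable. Since definability is preserved under finite composition, finite sums, Hadamard products, and squared norms, it follows successively that $\Phi_W$, then $L(W)$, then $\bar{\mathcal L}(M,\Gamma)=L(W_0\odot M)+\frac{\rho}{2}\|M-\Gamma\|_F^2$, then $\Omega(\Gamma)$ and the pairing $\langle\Gamma,g\rangle$, are all definable in $\mathbb{R}_{\exp}$; assembling them into $F=\alpha\bar{\mathcal L}+\Omega(\Gamma)+\Omega^*(g)-\langle\Gamma,g\rangle$ then yields a definable $F$, which is therefore KL on bounded sets, establishing (e).

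I expect the main obstacle to be the definability of the conjugate $\Omega^*$, since in general the Legendre--Fenchel transform need not remain inside a fixed o-minimal structure, so I would avoid any generic ``conjugate of a definable function is definable'' claim. Instead I would exploit the explicit form of the group-Lasso conjugate, namely that $\Omega^*$ is the indicator function of a product of dual $\ell_2$ balls, which is semialgebraic and hence definable in every o-minimal structure; this sidesteps the delicate conjugation step. With (a)--(e) in hand, Theorem~\ref{Thm:conv-SLBI+} applies and gives convergence of $(M_k,\Gamma_k,g_k)$ to a critical point of $F$, of $(M_k,\Gamma_k)$ to a stationary point of $\bar{\mathcal L}$ defined in Eq.~\ref{Eq:sparse_loss}, and in particular of $\{M_k\}$ to a stationary point of $L(M)$, which is exactly the claim of Corollary~\ref{Corollary:DL}.
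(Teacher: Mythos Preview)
Your proposal is correct and takes essentially the same route as the paper: show that each summand of the Lyapunov function $F$ in \eqref{Eq:Lyapunov-fun-conjugate} is definable in an o-minimal structure containing $\exp$, conclude that $F$ is KL on bounded sets, and then invoke Theorem~\ref{Thm:conv-SLBI+}. Your identification of $\Omega^*$ as the indicator of a product of unit $\ell_2$-balls is in fact the correct conjugate of the group Lasso (the paper writes $\Omega^*(g)=\max_g\|g\|_2$, which is the dual \emph{norm} rather than the conjugate), and your citation of the nonsmooth KL result is the appropriate one since $F$ contains nonsmooth terms; both points are cosmetic, however, since the paper's stated $\Omega^*$ is still semialgebraic and the conclusion is unchanged.
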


\subsection{Sufficient Descent Property along Lyapunov Function}

\label{sc:sufficient-descent}

Let $P_{k}:=(M_{k},\Gamma_{k})$, and $Q_{k}:=(P_{k},g_{k-1}),k\in\mathbb{N}$.
In the following, we present the sufficient descent property of $Q_{k}$
along the Lyapunov function $F$.

\noindent \textbf{Lemma.} \label{Lemma:sufficient-descent} Suppose
that $\mathcal{L}$ is continuously differentiable and $\nabla\mathcal{L}$ is Lipschitz
continuous with a constant $Lip>0$,$C=\max|W_0|$is the max value of the pretrained model parameters $W_0$. Let $\{Q_{k}\}$ be a sequence
generated by SLBI with a finite initialization. If $0<\alpha<\frac{2}{\kappa(Lip*C+\nu^{-1})}$,
then
\[
F(Q_{k+1})\leq F(Q_{k})-\rho\|Q_{k+1}-Q_{k}\|_{2}^{2},
\]
where $\rho:=\frac{1}{\kappa}-\frac{\alpha(Lip*C+\nu^{-1})}{2}$.

\begin{proof} By the optimality condition of (\ref{Eq:SLBI-reform2-iter1})
and also the inclusion $p_{k}=[0,g_{k}]^{T}\in\partial R(P_{k})$,
there holds
\begin{align*}
\kappa(\alpha\nabla\bar{\cal L}(P_{k})+p_{k+1}-p_{k})+P_{k+1}-P_{k}=0,
\end{align*}
which implies
\begin{align}
-\langle\alpha\nabla\bar{\cal L}(P_{k}),P_{k+1}-P_{k}\rangle=\kappa^{-1}\|P_{k+1}-P_{k}\|_{2}^{2}+D(\Gamma_{k+1},\Gamma_{k})\label{Eq:innerproduct-term}
\end{align}
where
\[
D(\Gamma_{k+1},\Gamma_{k}):=\langle g_{k+1}-g_{k},\Gamma_{k+1}-\Gamma_{k}\rangle.
\]

Let $W_0 \odot M = \hat{W}$,with $\hat{\mathcal{L}}(M)=\mathcal{L}(W_{0}\odot M)$,
\begin{align}
    \nabla \hat{\mathcal{L}}(M)=\sum \nabla \mathcal{L}(\hat{W}) * W_0
\end{align}
Noting that $\bar{\cal L}(P)=\hat{\mathcal{L}}(M)+\frac{1}{2\nu}\|M-\Gamma\|_{2}^{2}= \mathcal{L}(W_{0}\odot M)+\frac{1}{2\nu}\|M-\Gamma\|_{2}^{2}$ , 
and by the Lipschitz continuity of $\nabla \mathcal{L}(W)$ with constants
$Lip>0,C=\max{|W_0|}>0$ implies $\nabla\bar{\cal L}$ is Lipschitz continuous with
a constant $Lip*C+\nu^{-1}$. This implies
\begin{align*}
\bar{\cal L}(P_{k+1})\leq\bar{\cal L}(P_{k})+\langle\nabla\bar{\cal L}(P_{k}),P_{k+1}-P_{k}\rangle+\frac{Lip*C+\nu^{-1}}{2}\|P_{k+1}-P_{k}\|_{2}^{2}.
\end{align*}
Substituting the above inequality into (\ref{Eq:innerproduct-term})
yields
\begin{align}
\alpha\bar{\cal L}(P_{k+1})+D(\Gamma_{k+1},\Gamma_{k})+\rho\|P_{k+1}-P_{k}\|_{2}^{2}\leq\alpha\bar{\cal L}(P_{k}).\label{Eq:sufficientdescent-barL}
\end{align}
Adding some terms in both sides of the above inequality and after
some reformulations implies
\begin{align}
 & \alpha\bar{\cal L}(P_{k+1})+B_{\Omega}^{g_{k}}(\Gamma_{k+1},\Gamma_{k})\\
 & \leq\alpha\bar{\cal L}(P_{k})+B_{\Omega}^{g_{k-1}}(\Gamma_{k},\Gamma_{k-1})-\rho\|P_{k+1}-P_{k}\|_{2}^{2}-\left(D(\Gamma_{k+1},\Gamma_{k})+B_{\Omega}^{g_{k-1}}(\Gamma_{k},\Gamma_{k-1})-B_{\Omega}^{g_{k}}(\Gamma_{k+1},\Gamma_{k})\right)\nonumber \\
 & =\alpha\bar{\cal L}(P_{k})+B_{\Omega}^{g_{k-1}}(\Gamma_{k},\Gamma_{k-1})-\rho\|P_{k+1}-P_{k}\|_{2}^{2}-B_{\Omega}^{g_{k+1}}(\Gamma_{k},\Gamma_{k-1})-B_{\Omega}^{g_{k-1}}(\Gamma_{k},\Gamma_{k-1}),\nonumber
\end{align}
where the final equality holds for $D(\Gamma_{k+1},\Gamma_{k})-B_{\Omega}^{g_{k}}(\Gamma_{k+1},\Gamma_{k})=B_{\Omega}^{g_{k+1}}(\Gamma_{k},\Gamma_{k-1}).$
That is,
\begin{align}
F(Q_{k+1}) & \leq F(Q_{k})-\rho\|P_{k+1}-P_{k}\|_{2}^{2}-B_{\Omega}^{g_{k+1}}(\Gamma_{k},\Gamma_{k-1})-B_{\Omega}^{g_{k-1}}(\Gamma_{k},\Gamma_{k-1})\label{Eq:sufficientdescent-Breg}\\
 & \leq F(Q_{k})-\rho\|P_{k+1}-P_{k}\|_{2}^{2},\label{Eq:sufficientdescent}
\end{align}
where the final inequality holds for $B_{\Omega}^{g_{k+1}}(\Gamma_{k},\Gamma_{k-1})\geq0$
and $B_{\Omega}^{g_{k-1}}(\Gamma_{k},\Gamma_{k-1})\geq0.$ Thus, we
finish the proof of this lemma. \end{proof}

Based on Lemma \ref{Lemma:sufficient-descent}, we directly obtain
the following lemma.

\begin{lemma} \label{Lemma:convergence-funcvalue} Suppose that assumptions
of Lemma \ref{Lemma:sufficient-descent} hold. Suppose further that
Assumption \ref{Assumption} (b)-(d) hold. Then
\begin{enumerate}
\item[(i)] both $\alpha\{\bar{\cal L}(P_{k})\}$ and $\{F(Q_{k})\}$ converge
to the same finite value, and $\lim_{k\rightarrow\infty}B_{\Omega}^{g_{k}}(\Gamma_{k+1},\Gamma_{k})=0.$
\item[(ii)] the sequence $\{(M_{k},\Gamma_{k},g_{k})\}$ is bounded,
\item[(iii)] $\lim_{k\rightarrow\infty}\|P_{k+1}-P_{k}\|_{2}^{2}=0$ and $\lim_{k\rightarrow\infty}D(\Gamma_{k+1},\Gamma_{k})=0,$
\item[(iv)] $\frac{1}{K}\sum_{k=0}^{K}\|P_{k+1}-P_{k}\|_{2}^{2}\rightarrow0$
at a rate of ${\cal O}(1/K)$.
\end{enumerate}
\end{lemma}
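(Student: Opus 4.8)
The plan is to derive all four items as elementary consequences of the sufficient-descent estimate already established in Lemma~\ref{Lemma:sufficient-descent} --- and of its sharper intermediate form~(\ref{Eq:sufficientdescent-barL}), namely $\alpha\bar{\mathcal{L}}(P_{k+1})+D(\Gamma_{k+1},\Gamma_k)+\rho\|P_{k+1}-P_k\|_2^2\le\alpha\bar{\mathcal{L}}(P_k)$ --- together with the regularity hypotheses Assumption~\ref{Assumption}(b)--(d). Throughout, the step-size bound $0<\alpha<\frac{2}{\kappa(Lip\cdot C+\nu^{-1})}$ guarantees that the descent constant $\rho=\frac1\kappa-\frac{\alpha(Lip\cdot C+\nu^{-1})}{2}$ is strictly positive, so the whole argument reduces to a telescoping sum.

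I would first treat boundedness, item (ii). Inequality~(\ref{Eq:sufficientdescent-barL}) and the lower bound $\bar{\mathcal{L}}(M,\Gamma)=\mathcal{L}(W_0\odot M)+\frac1{2\nu}\|M-\Gamma\|_2^2\ge 0$ (Assumption~\ref{Assumption}(c)) show that $\{\alpha\bar{\mathcal{L}}(P_k)\}$ is non-increasing and hence convergent; in particular $\{\mathcal{L}(W_0\odot M_k)\}$ and $\{\|M_k-\Gamma_k\|_2^2\}$ are bounded. The bounded-level-set hypothesis (b) then confines $\{M_k\}$ to a compact set, the bound on $\|M_k-\Gamma_k\|_2$ propagates it to $\{\Gamma_k\}$, and the locally-bounded-subgradient hypothesis (d) supplies a uniform constant $C'$ with $\|g_k\|\le C'$; thus $\{(M_k,\Gamma_k,g_k)\}$ is bounded.

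Items (iii) and (iv) then follow by summing. Adding~(\ref{Eq:sufficientdescent-barL}) over $k=0,\dots,K$ and using $\bar{\mathcal{L}}\ge0$ gives $\sum_{k\ge0}\bigl(D(\Gamma_{k+1},\Gamma_k)+\rho\|P_{k+1}-P_k\|_2^2\bigr)\le\alpha\bar{\mathcal{L}}(P_0)<\infty$; every summand is nonnegative ($D$ by monotonicity of $\partial\Omega$), so $\|P_{k+1}-P_k\|_2^2\to0$ and $D(\Gamma_{k+1},\Gamma_k)\to0$, which is (iii), and dividing the partial sum by $K$ yields the $\mathcal{O}(1/K)$ Cesàro bound of (iv). For (i), the descent inequality $F(Q_{k+1})\le F(Q_k)-\rho\|P_{k+1}-P_k\|_2^2$ of Lemma~\ref{Lemma:sufficient-descent} makes $\{F(Q_k)\}$ non-increasing, and $F(Q_k)=\alpha\bar{\mathcal{L}}(P_k)+B_\Omega^{g_{k-1}}(\Gamma_k,\Gamma_{k-1})\ge0$, so $\{F(Q_k)\}$ converges; expanding $B_\Omega^{g_{k-1}}(\Gamma_k,\Gamma_{k-1})=\Omega(\Gamma_k)-\Omega(\Gamma_{k-1})-\langle g_{k-1},\Gamma_k-\Gamma_{k-1}\rangle$ and invoking Lipschitz continuity of $\Omega$ on the compact set from (ii), the bound $\|g_{k-1}\|\le C'$, and $\Gamma_k-\Gamma_{k-1}\to0$, shows $B_\Omega^{g_{k-1}}(\Gamma_k,\Gamma_{k-1})\to0$; hence $\{\alpha\bar{\mathcal{L}}(P_k)\}$ converges to the same finite limit as $\{F(Q_k)\}$, and $\lim_k B_\Omega^{g_k}(\Gamma_{k+1},\Gamma_k)=0$ follows by the identical estimate.

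The one genuinely delicate point will be the step ``$\mathcal{L}(W_0\odot M_k)$ bounded $\Rightarrow$ $M_k$ bounded'' inside item (ii): the level-set hypothesis controls the composite $W_0\odot M_k$ rather than $M_k$ itself, so one must additionally use that no pruned feature column of the pretrained weight $W_0$ vanishes identically --- so that the masked objective genuinely coerces every mask coordinate --- or else restrict attention to the effective coordinates; granting that, everything else is routine telescoping bookkeeping plus convexity of $\Omega$.
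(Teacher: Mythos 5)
Your proposal is correct and follows essentially the same route as the paper: the sufficient-descent inequality from Lemma~\ref{Lemma:sufficient-descent} gives monotonicity and, by telescoping against the lower bound, items (iii)--(iv), while (ii) comes from the bounded level sets, the quadratic coupling term, and Assumption~\ref{Assumption}(d), and (i) from convergence of $\{F(Q_k)\}$ together with $B_{\Omega}^{g_{k-1}}(\Gamma_k,\Gamma_{k-1})\to 0$. Your only deviations are harmless: you obtain the vanishing of the Bregman term from (iii) plus local Lipschitzness of $\Omega$ rather than directly from the refined descent inequality (\ref{Eq:sufficientdescent-Breg}) as the paper does, and you explicitly flag the $L(W_0\odot M_k)$-versus-$M_k$ level-set subtlety that the paper's own proof passes over silently.
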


\begin{proof} By (\ref{Eq:sufficientdescent-barL}), $\bar{\cal L}(P_{k})$
is monotonically decreasing due to $D(\Gamma_{k+1},\Gamma_{k})\geq0$.
Similarly, by (\ref{Eq:sufficientdescent}), $F(Q^{k})$ is also monotonically
decreasing. By the lower boundedness assumption of $\mathcal{L}(W)$, both
$\bar{\cal L}(P)$ and $F(Q)$ are lower bounded by their definitions,
i.e., (\ref{Eq:sparse_loss}) and (\ref{Eq:Lyapunov-fun}), respectively.
Therefore, both $\{\bar{\cal L}(P_{k})\}$ and $\{F(Q_{k})\}$ converge,
and it is obvious that $\lim_{k\rightarrow\infty}F(Q_{k})\geq\lim_{k\rightarrow\infty}\alpha\bar{\cal L}(P_{k})$.
By (\ref{Eq:sufficientdescent-Breg}),
\begin{align*}
B_{\Omega}^{g_{k-1}}(\Gamma_{k},\Gamma_{k-1})\leq F(Q_{k})-F(Q_{k+1}),\ k=1,\ldots.
\end{align*}
By the convergence of $F(Q_{k})$ and the nonegativeness of $B_{\Omega}^{g_{k-1}}(\Gamma_{k},\Gamma_{k-1})$,
there holds
\[
\lim_{k\rightarrow\infty}B_{\Omega}^{g_{k-1}}(\Gamma_{k},\Gamma_{k-1})=0.
\]
By the definition of $F(Q_{k})=\alpha\bar{\cal L}(P_{k})+B_{\Omega}^{g_{k-1}}(\Gamma_{k},\Gamma_{k-1})$
and the above equality, it yields
\[
\lim_{k\rightarrow\infty}F(Q_{k})=\lim_{k\rightarrow\infty}\alpha\bar{\cal L}(P_{k}).
\]

Since $L(M)$ has bounded level sets, then $M_{k}$ is bounded.
By the definition of $\bar{\cal L}(M,\Gamma)$ and the finiteness of
$\bar{\cal L}(M_{k},\Gamma_{k})$, $\Gamma_{k}$ is also bounded due
to $M_{k}$ is bounded. The boundedness of $g_{k}$ is due to $g_{k}\in\partial\Omega(\Gamma_{k})$,
condition (d), and the boundedness of $\Gamma_{k}$.

By (\ref{Eq:sufficientdescent}), summing up (\ref{Eq:sufficientdescent})
over $k=0,1,\ldots,K$ yields
\begin{align}
\sum_{k=0}^{K}\left(\rho\|P_{k+1}-P_{k}\|^{2}+D(\Gamma_{k+1},\Gamma_{k})\right)<\alpha\bar{\cal L}(P_{0})<\infty.\label{Eq:summable}
\end{align}
Letting $K\rightarrow\infty$ and noting that both $\|P_{k+1}-P_{k}\|^{2}$
and $D(\Gamma_{k+1},\Gamma_{k})$ are nonnegative, thus
\[
\lim_{k\rightarrow\infty}\|P_{k+1}-P_{k}\|^{2}=0,\quad\lim_{k\rightarrow\infty}D(\Gamma_{k+1},\Gamma_{k})=0.
\]
Again by (\ref{Eq:summable}),
\begin{align*}
\frac{1}{K}\sum_{k=0}^{K}\left(\rho\|P_{k+1}-P_{k}\|^{2}+D(\Gamma_{k+1},\Gamma_{k})\right)<K^{-1}\alpha\bar{\cal L}(P_{0}),
\end{align*}
which implies $\frac{1}{K}\sum_{k=0}^{K}\|P_{k+1}-P_{k}\|^{2}\rightarrow0$
at a rate of ${\cal O}(1/K)$. \end{proof}

\subsection{Relative Error Property}

\label{sc:relative-error}

In this subsection, we provide the bound of subgradient by the discrepancy
of two successive iterates. By the definition of $F$ (\ref{Eq:Lyapunov-fun}),
\begin{align}
H_{k+1}:=\left(\begin{array}{c}
\alpha\nabla_{M}\bar{\cal L}(M_{k+1},\Gamma_{k+1})\\
\alpha\nabla_{\Gamma}\bar{\cal L}(M_{k+1},\Gamma_{k+1})+g_{k+1}-g_{k}\\
\Gamma_{k}-\Gamma_{k+1}
\end{array}\right)\in\partial F(Q_{k+1}),\ k\in\mathbb{N}.\label{Eq:subgradient}
\end{align}

\noindent \textbf{Lemma. } \label{Lemma:relative-error} Under assumptions
of Lemma \ref{Lemma:convergence-funcvalue}, then
\[
\|H_{k+1}\|\leq\rho_{1}\|Q_{k+1}-Q_{k}\|,\ \mathrm{for}\ H_{k+1}\in\partial F(Q_{k+1}),\ k\in\mathbb{N},
\]
where $\rho_{1}:=2\kappa^{-1}+1+\alpha(Lip*C+2\nu^{-1})$. Moreover,
$\frac{1}{K}\sum_{k=1}^{K}\|H_{k}\|^{2}\rightarrow0$ at a rate of
${\cal O}(1/K)$.

\begin{proof} Note that
\begin{align}
\nabla_{M}\bar{\cal L}(M_{k+1},\Gamma_{k+1}) & =(\nabla_{M}\bar{\cal L}(M_{k+1},\Gamma_{k+1})-\nabla_{M}\bar{\cal L}(M_{k+1},\Gamma_{k}))\label{Eq:nabla-W-decomp}\\
 & +(\nabla_{M}\bar{\cal L}(M_{k+1},\Gamma_{k})-\nabla_{M}\bar{\cal L}(M_{k},\Gamma_{k}))+\nabla_{M}\bar{\cal L}(M_{k},\Gamma_{k}).\nonumber
\end{align}
By the definition of $\bar{\cal L}$ (see (\ref{Eq:sparse_loss})),
\begin{align*}
\|\nabla_{M}\bar{\cal L}(M_{k+1},\Gamma_{k+1})-\nabla_{M}\bar{\cal L}(M_{k+1},\Gamma_{k})\| & =\nu^{-1}\|\Gamma_{k}-\Gamma_{k+1}\|,\\
\|\nabla_{M}\bar{\cal L}(M_{k+1},\Gamma_{k})-\nabla_{M}\bar{\cal L}(M_{k},\Gamma_{k})\| & =\|(\nabla\mathcal{L}(M_{k+1})-\nabla\mathcal{L}(M_{k}))+\nu^{-1}(M_{k+1}-M_{k})\|\\
 & \leq(Lip*C+\nu^{-1})\|M_{k+1}-M_{k}\|,
\end{align*}
where the last inequality holds for the Lipschitz continuity of $\nabla\mathcal{L}$
with a constant $Lip>0$,and $C=\max{|W_0|}$ .By (\ref{Eq:SLBI-reform-iter1}),
\begin{align*}
\|\nabla_{M}\bar{\cal L}(M_{k},\Gamma_{k})\|=(\kappa\alpha)^{-1}\|M_{k+1}-M_{k}\|.
\end{align*}
Substituting the above (in)equalities into (\ref{Eq:nabla-W-decomp})
yields
\begin{align*}
\|\nabla_{M}\bar{\cal L}(M_{k+1},\Gamma_{k+1})\|\leq\left[(\kappa\alpha)^{-1}+Lip*C+\nu^{-1}\right]\cdot\|M_{k+1}-M_{k}\|+\nu^{-1}\|\Gamma_{k+1}-\Gamma_{k}\|
\end{align*}
Thus,
\begin{align}
\|\alpha\nabla_{M}\bar{\cal L}(M_{k+1},\Gamma_{k+1})\|\leq\left[\kappa^{-1}+\alpha(Lip*C+\nu^{-1})\right]\cdot\|M_{k+1}-M_{k}\|+\alpha\nu^{-1}\|\Gamma_{k+1}-\Gamma_{k}\|.\label{Eq:subgradbound-part1}
\end{align}

By (\ref{Eq:SLBI-reform-iter3}), it yields
\begin{align*}
g_{k+1}-g_{k}=\kappa^{-1}(\Gamma_{k}-\Gamma_{k+1})-\alpha\nabla_{\Gamma}\bar{\cal L}(M_{k},\Gamma_{k}).
\end{align*}
Noting that $\nabla_{\Gamma}\bar{\cal L}(M_{k},\Gamma_{k})=\nu^{-1}(\Gamma_{k}-M_{k})$,
and after some simplifications yields
\begin{align}
\label{Eq:subgradbound-part2}
\|\alpha\nabla_{\Gamma}\bar{\cal L}(M_{k+1},\Gamma_{k+1})+g_{k+1}-g_{k}\| & =\|(\kappa^{-1}-\alpha\nu^{-1})\cdot(\Gamma_{k}-\Gamma_{k+1})+\alpha\nu^{-1}(M_{k}-M_{k+1})\|\nonumber \\
 & \leq\alpha\nu^{-1}\|M_{k}-M_{k+1}\|+(\kappa^{-1}-\alpha\nu^{-1})\|\Gamma_{k}-\Gamma_{k+1}\|,
\end{align}
where the last inequality holds for the triangle inequality and $\kappa^{-1}>\alpha\nu^{-1}$
by the assumption.

By (\ref{Eq:subgradbound-part1}), (\ref{Eq:subgradbound-part2}),
and the definition of $H_{k+1}$ (\ref{Eq:subgradient}), there holds
\begin{align}
\|H_{k+1}\| & \leq\left[\kappa^{-1}+\alpha(Lip*C+2\nu^{-1})\right]\cdot\|M_{k+1}-M_{k}\|+(\kappa^{-1}+1)\|\Gamma_{k+1}-\Gamma_{k}\|\nonumber \\
 & \leq\left[2\kappa^{-1}+1+\alpha(Lip*C+2\nu^{-1})\right]\cdot\|P_{k+1}-P_{k}\|\label{Eq:subgradbound-Pk}\\
 & \leq\left[2\kappa^{-1}+1+\alpha(Lip*C+2\nu^{-1})\right]\cdot\|Q_{k+1}-Q_{k}\|.\nonumber
\end{align}

By (\ref{Eq:subgradbound-Pk}) and Lemma \ref{Lemma:convergence-funcvalue}(iv),
$\frac{1}{K}\sum_{k=1}^{K}\|H_{k}\|^{2}\rightarrow0$ at a rate of
${\cal O}(1/K)$.

This finishes the proof of this lemma. \end{proof}

\subsection{Kurdyka-{\L }ojasiewicz Property \label{subsec:Kurdyka-property}}


To introduce the definition of the Kurdyka-{\L }ojasiewicz (KL)
property, we need some notions and notations from variational analysis,
which can be found in \cite{Rockafellar1998}.

The notion of subdifferential plays a central role in the following
definitions. For each ${\bf x}\in\mathrm{dom}(h):=\{{\bf x}\in\mathbb{R}^{p}:h({\bf x})<+\infty\}$,
the \textit{Fr\'{e}chet subdifferential} of $h$ at ${\bf x}$, written
$\widehat{\partial}h({\bf x)}$, is the set of vectors ${\bf v}\in\mathbb{R}^{p}$
which satisfy
\[
\lim\inf_{{\bf y}\neq{\bf x},{\bf y}\rightarrow{\bf x}}\ \frac{h({\bf y})-h({\bf x})-\langle{\bf v},{\bf y}-{\bf x}\rangle}{\|{\bf x}-{\bf y}\|}\geq0.
\]
When ${\bf x}\notin\mathrm{dom}(h),$ we set $\widehat{\partial}h({\bf x})=\varnothing.$
The \emph{limiting-subdifferential} (or simply \emph{subdifferential})
of $h$ introduced in \cite{Mordukhovich-2006}, written $\partial h({\bf x})$
at ${\bf x}\in\mathrm{dom}(h)$, is defined by
\begin{align}
\partial h({\bf x}):=\{{\bf v}\in\mathbb{R}^{p}:\exists{\bf x}^{k}\to{\bf x},\;h({\bf x}^{k})\to h({\bf x}),\;{\bf v}^{k}\in\widehat{\partial}h({\bf x}^{k})\to{\bf v}\}.\label{Def:limiting-subdifferential}
\end{align}
A necessary (but not sufficient) condition for ${\bf x}\in\mathbb{R}^{p}$
to be a minimizer of $h$ is $\mathbf{0}\in\partial h({\bf x})$.
A point that satisfies this inclusion is called \textit{limiting-critical}
or simply \textit{critical}. The distance between a point ${\bf x}$
to a subset ${\cal S}$ of $\mathbb{R}^{p}$, written $\mathrm{dist}({\bf x},{\cal S})$,
is defined by $\mathrm{dist}({\bf x},{\cal S})=\inf\{\|{\bf x}-{\bf s}\|:{\bf s}\in{\cal S}\}$,
where $\|\cdot\|$ represents the Euclidean norm.

Let $h:\mathbb{R}^{p}\to\mathbb{R}\cup\{+\infty\}$ be an extended-real-valued
function (respectively, $h:\mathbb{R}^{p}\rightrightarrows\mathbb{R}^{q}$
be a point-to-set mapping), its \textit{graph} is defined by
\begin{align*}
 & \mathrm{Graph}(h):=\{({\bf x},y)\in\mathbb{R}^{p}\times\mathbb{R}:y=h({\bf x})\},\\
(\text{resp.}\; & \mathrm{Graph}(h):=\{({\bf x},{\bf y})\in\mathbb{R}^{p}\times\mathbb{R}^{q}:{\bf y}\in h({\bf x})\}),
\end{align*}
and its domain by $\mathrm{dom}(h):=\{{\bf x}\in\mathbb{R}^{p}:h({\bf x})<+\infty\}$
(resp. $\mathrm{dom}(h):=\{{\bf x}\in\mathbb{R}^{p}:h({\bf x})\neq\varnothing\}$).
When $h$ is a proper function, i.e., when $\mathrm{dom}(h)\neq\varnothing,$
the set of its global minimizers (possibly empty) is denoted by
\[
\arg\min h:=\{{\bf x}\in\mathbb{R}^{p}:h({\bf x})=\inf h\}.
\]

The KL property \citep{Lojasiewicz-KL1963,Lojasiewicz-KL1993,Kurdyka-KL1998,Bolte-KL2007a,Bolte-KL2007}
plays a central role in the convergence analysis of nonconvex algorithms
\citep{Attouch2013,Wang-ADMM2018}. The following definition is adopted
from \cite{Bolte-KL2007}.

\begin{definition}{[}Kurdyka-{\L }ojasiewicz property{]} \label{def:KL-function}
A function $h$ is said to have the Kurdyka-{\L }ojasiewicz (KL)
property at $\bar{u}\in\mathrm{dom}(\partial h):=\{v\in\mathbb{R}^{n}|\partial h(v)\neq\emptyset\}$,
if there exists a constant $\eta\in(0,\infty)$, a neighborhood ${\cal N}$
of $\bar{u}$ and a function $\phi:[0,\eta)\rightarrow\mathbb{R}_{+}$,
which is a concave function that is continuous at $0$ and satisfies
$\phi(0)=0$, $\phi\in{\cal C}^{1}((0,\eta))$, i.e., $\phi$ is continuous
differentiable on $(0,\eta)$, and $\phi'(s)>0$ for all $s\in(0,\eta)$,
such that for all $u\in{\cal N}\cap\{u\in\mathbb{R}^{n}|h(\bar{u})<h(u)<h(\bar{u})+\eta\}$,
the following inequality holds
\begin{align}
\phi'(h(u)-h(\bar{u}))\cdot\mathrm{dist}(0,\partial h(u))\geq1.\label{Eq:def-KL-function}
\end{align}
If $h$ satisfies the KL property at each point of $\mathrm{dom}(\partial h)$,
$h$ is called a KL function. \end{definition}

KL functions include semialgebraic functions, real analytic functions,
continuous subanalytic functions \citep{Bolte-KL2007a} and locally
strongly convex functions, 
tame functions defined in some o-minimal structures \citep{Kurdyka-KL1998,Bolte-KL2007}. In the following, we provide some important
examples that satisfy the Kurdyka-{\L }ojasiewicz property.

\begin{definition}{[}Semialgebraic{]}\hfill{}\label{Def:semialgebraic}
\begin{enumerate}
\item[(a)] A function $h:\mathbb{R}^{p}\rightarrow\mathbb{R}\cup\{+\infty\}$
(resp. a point-to-set mapping $h:\mathbb{R}^{p}\rightrightarrows\mathbb{R}^{q}$)
is called \textit{semialgebraic} if its graph $\mathrm{Graph}(h)$
is a semialgebraic set.
\item[(b)] A set ${\cal D}\subset\mathbb{R}^{p}$ is called semialgebraic \citep{Bochnak-semialgebraic1998}
if it can be represented as
\[
{\cal D}=\bigcup_{i=1}^{s}\bigcap_{j=1}^{t}\left\lbrace {\bf x}\in\mathbb{R}^{p}:P_{ij}({\bf x})=0,Q_{ij}({\bf x})>0\right\rbrace ,
\]
where $P_{ij},Q_{ij}$ are real polynomial functions for $1\leq i\leq s,1\leq j\leq t.$

\end{enumerate}
\end{definition}

According to \citep{Lojasiewicz1965-semianalytic,Bochnak-semialgebraic1998}
and \citep[I.2.9, page 52]{Shiota1997-subanalytic}, the class of semialgebraic
sets are stable under the operation of finite union, finite intersection,
Cartesian product or complementation. Some typical examples include
\text{polynomial} functions, the indicator function of a semialgebraic
set, and the \text{Euclidean norm} \citep[page 26]{Bochnak-semialgebraic1998}.
\begin{definition}{[}Real analytic{]} \label{Def:real-analytic}
A function $h$ with domain an open set $U\subset\mathbb{R}$ and
range the set of either all real or complex numbers, is said to be
\textbf{real analytic} at $u$ if the function $h$ may be represented
by a convergent power series on some interval of positive radius centered
at $u$: $h(x)=\sum_{j=0}^{\infty}\alpha_{j}(x-u)^{j},$ for some
$\{\alpha_{j}\} \subset \mathbb{R} $. The function is said to be \textbf{real
analytic} on $V\subset U$ if it is real analytic at each $u\in V$
\citep[Definition 1.1.5]{Krantz2002-real-analytic}. The real analytic
function $f$ over $\mathbb{R}^{p}$ for some positive integer $p>1$
can be defined similarly.

According to \cite{Krantz2002-real-analytic}, typical real analytic
functions include polynomials, exponential functions, and the logarithm,
trigonometric and power functions on any open set of their domains.
One can verify whether a multivariable real function $h({\bf x)}$
on $\mathbb{R}^{p}$ is analytic by checking the analyticity of $g(t):=h({\bf x}+t{\bf y})$
for any ${\bf x},{\bf y}\in\mathbb{R}^{p}$. \end{definition}
\subsection{KL Property in Deep Learning and Proof of Corollary \ref{Corollary:DL}}

\label{sc:convergence-DL}

In the following, we consider the deep neural network training problem.
Consider a $l$-layer feedforward neural network including $l-1$ hidden layers of the neural network. Particularly, let $d_{i}$ be the number of hidden units in the $i$-th hidden layer for $i=1,\ldots,l-1$.

Let $d_{0}$ and $d_{l}$ be the number of units of input and output layers, respectively. Let $W^{i}\in R^{d_{i}\times d_{i-1}}$ be the weight matrix between the $(i-1)$-th layer and the $i$-th layer for any $i=1,\ldots l$\footnote{To simplify notations, we regard the input and output layers as the $0$-th and the $l$-th layers, respectively, and absorb the bias of each layer into $W^{i}$.}.

According to Theorem \ref{Thm:conv-SLBI+}, one major condition is
to verify the introduced Lyapunov function $F$ defined in (\ref{Eq:Lyapunov-fun})
satisfies the Kurdyka-{\L }ojasiewicz property. For this purpose,
we need an extension of semialgebraic set, called the \textit{o-minimal
structure} (see, for instance \cite{Coste1999-o-minimal}, \cite{vandenDries1986-o-minimal},
\cite{Kurdyka-KL1998}, \cite{Bolte-KL2007}). The following definition
is from \cite{Bolte-KL2007}.

\begin{definition}{[}o-minimal structure{]} \label{Def:o-minimal}
An o-minimal structure on $(\mathbb{R},+,\cdot)$ is a sequence of
boolean algebras ${\cal O}_{n}$ of ``definable'' subsets of $\mathbb{R}^{n}$,
such that for each $n\in\mathbb{N}$
\begin{enumerate}
\item[(i)] the elements of ${\cal O}_{1}$ are exactly finite unions of intervals
and points.
\item[(ii)] ${\cal O}_{n}$ contains the family of algebraic subsets of $\mathbb{R}^{n}$,
that is, every set of the form
\item[(iii)] if $A$ belongs to ${\cal O}_{n}$, then $A\times\mathbb{R}$ and
$\mathbb{R}\times A$ belong to ${\cal O}_{n+1}$;
\item[(iv)] if $\Pi:\mathbb{R}^{n+1}\rightarrow\mathbb{R}^{n}$ is the canonical
projection onto $\mathbb{R}^{n}$, then for any $A$ in ${\cal O}_{n+1}$,
the set $\Pi(A)$ belongs to ${\cal O}_{n}$;

\[
\{x\in\mathbb{R}^{n}:p(x)=0\},
\]
where $p:\mathbb{R}^{n}\rightarrow\mathbb{R}$ is a polynomial function.

\end{enumerate}
\end{definition}

Based on the definition of o-minimal structure, we can show the definition
of the \textit{definable function}.

\begin{definition}{[}Definable function{]} \label{Def:definable-function}
Given an o-minimal structure ${\cal O}$ (over $(\mathbb{R},+,\cdot)$),
a function $f:\mathbb{R}^{n}\rightarrow\mathbb{R}$ is said to be
\textit{definable} in ${\cal O}$ if its graph belongs to ${\cal O}_{n+1}$.
\end{definition}

According to \cite{vandenDries1996-GC,Bolte-KL2007}, there are some
important facts of the o-minimal structure, shown as follows.
\begin{enumerate}
\item[(i)] The o-minimal structure is stable under the sum, composition, the
inf-convolution and several other classical operations of analysis.
\item[(iI)] The collection of \textit{semialgebraic} sets is an o-minimal structure.
Recall the semialgebraic sets are Bollean combinations of sets of
the form
\[
\{x\in\mathbb{R}^{n}:p(x)=0,q_{1}(x)<0,\ldots,q_{m}(x)<0\},
\]
where $p$ and $q_{i}$'s are polynomial functions in $\mathbb{R}^{n}$.
\item[(iiI)] There exists an o-minimal structure that contains the sets of the
form
\[
\{(x,t)\in[-1,1]^{n}\times\mathbb{R}:f(x)=t\}
\]
where $f$ is real-analytic around $[-1,1]^{n}$.
\item[(iV)] There exists an o-minimal structure that contains simultaneously
the graph of the exponential function $\mathbb{R}\ni x\mapsto\exp(x)$
and all semialgebraic sets.

\end{enumerate}
The Kurdyka-{\L }ojasiewicz property for the smooth definable function
and non-smooth definable function were established in \citep[Theorem 1]{Kurdyka-KL1998}
and \citep[Theorem 14]{Bolte-KL2007}, respectively. Now we are ready
to present the proof of Corollary \ref{Corollary:DL}.

\begin{proof}{[}Proof of Corollary \ref{Corollary:DL}{]} To justify
this corollary, we only need to verify the associated Lyapunov function
$F$ satisfies Kurdyka-{\L }ojasiewicz inequality. In this case
and by (\ref{Eq:Lyapunov-fun-conjugate}), $F$ can be rewritten as
follows
\begin{align*}
F({\cal M},\Gamma,{\cal G})=\alpha\left(\mathcal{L}(M,\Gamma)+\frac{1}{2\nu}\|M-\Gamma\|^{2}\right)+\Omega(\Gamma)+\Omega^{*}(g)-\langle \Gamma,g\rangle.
\end{align*}

Because $\mathcal{L}$ and $\sigma_{i}$'s are definable by assumptions,
then $\mathcal{L}(M,\Gamma)$ are definable as compositions of definable functions.

Moreover, according to \cite{Krantz2002-real-analytic}, $\|M-\Gamma\|^{2}$
and $\langle \Gamma,g\rangle$ are semi-algebraic and thus definable. Since
the group Lasso $\Omega(\Gamma)=\sum_{g}\|\Gamma\|_{2}$ is the composition
of $l_{2}$ and $l_{1}$ norms, and the conjugate of group Lasso
penalty is the maximum of group $l_{2}$-norm, \emph{i.e.} $\Omega^{*}(\Gamma)=\max_{g}\|\Gamma_{g}\|_{2}$,
where the $l_{2}$, $l_{1}$, and $l_{\infty}$ norms are
definable, hence the group Lasso and its conjugate are definable as
compositions of definable functions. 
Therefore, $F$ is definable and hence satisfies Kurdyka-{\L }ojasiewicz
inequality by \citep[Theorem 1]{Kurdyka-KL1998}.

The verifications of other cases listed in assumptions can
be found in the proof of \citep[Proposition 1]{Zeng2019}. This finishes
the proof of this corollary. \end{proof}

\subsection{Proof of Theorem \ref{Thm:conv-SLBI+}}

Our analysis is mainly motivated by a paper \citep{Benning2017},
as well as the influential work \citep{Attouch2013}. According to Lemma 2.6 in
\cite{Attouch2013}, there are mainly four ingredients
in the analysis, that is, the \textit{sufficient descent property},
\textit{relative error property}, \textit{continuity property} of
the generated sequence and the \textit{Kurdyka-{\L }ojasiewicz property}
of the function. More specifically, we first establish the \textit{sufficient
descent property} of the generated sequence via exploiting the Lyapunov
function $F$ (see, (\ref{Eq:Lyapunov-fun})) in Lemma \ref{Lemma:sufficient-descent}
in Section \ref{sc:sufficient-descent}, and then show the \textit{relative
error property} of the sequence in Lemma \ref{Lemma:relative-error}
in Section \ref{sc:relative-error}. The \textit{continuity property}
is guaranteed by the continuity of $\bar{\cal L}(M,\Gamma)$ and the
relation $\lim_{k\rightarrow\infty}B_{\Omega}^{g_{k}}(\Gamma_{k+1},\Gamma_{k})=0$
established in Lemma \ref{Lemma:convergence-funcvalue}(i) in Section
\ref{sc:sufficient-descent}. Thus, together with the Kurdyka-{\L }ojasiewicz
assumption of $F$, we establish the global convergence of SLBI following
by \citep[Lemma 2.6]{Attouch2013}.

Let $(\bar{W},\bar{\Gamma},\bar{g})$ be a critical point of $F$,
then the following holds
\begin{align}
 & \partial_{M}F(\bar{M},\bar{\Gamma},\bar{g})=\alpha(\nabla\mathcal{L}(\bar{M})+\nu^{-1}(\bar{M}-\bar{\Gamma}))=0,\nonumber \\
 & \partial_{\Gamma}F(\bar{M},\bar{\Gamma},\bar{g})=\alpha\nu^{-1}(\bar{\Gamma}-\bar{M})+\partial\Omega(\bar{\Gamma})-\bar{g}\ni0,\label{Eq:critpoint-F}\\
 & \partial_{g}F(\bar{M},\bar{\Gamma},\bar{g})=\bar{\Gamma}-\partial\Omega^{*}(\bar{g})\ni0.\nonumber
\end{align}
By the final inclusion and the convexity of $\Omega$, it implies
$\bar{g}\in\partial\Omega(\bar{\Gamma})$. Plugging this inclusion
into the second inclusion yields $\alpha\nu^{-1}(\bar{\Gamma}-\bar{M})=0$.
Together with the first equality imples
\[
\nabla\bar{\cal L}(\bar{M},\bar{\Gamma})=0,\quad\nabla\mathcal{L}(\bar{M})=0.
\]
This finishes the proof of this theorem.

\end{document}